\documentclass[twocolumn,10pt]{article}
\usepackage[letterpaper,textwidth=6.75in,textheight=9.25in,top=1in,left=0.875in,columnsep=0.25in,footskip=0.45in]{geometry}

\usepackage[round]{natbib}

\usepackage[T1]{fontenc}
\usepackage{amsmath,amssymb,amsthm}
\usepackage{booktabs}
\usepackage{graphicx}
\usepackage{float}
\usepackage{xcolor}
\usepackage[hidelinks]{hyperref}
\usepackage{url}
\usepackage{titlesec}
\titleformat*{\section}{\large\bfseries\raggedright}
\titleformat*{\subsection}{\normalsize\bfseries\raggedright}
\titlespacing*{\section}{0pt}{2.0ex plus 0.5ex minus .2ex}{1.5ex plus 0.3ex minus .2ex}
\titlespacing*{\subsection}{0pt}{1.8ex plus 0.5ex minus .2ex}{0.8ex plus .2ex}
\newtheorem{proposition}{Proposition}
\newtheorem{lemma}{Lemma}
\theoremstyle{remark}

\newcommand{\Rsel}{R}
\newcommand{\lamhat}{\hat\lambda}
\newcommand{\Ucp}{U_{\mathrm{CP}}}
\newcommand{\pizero}{\pi_0}
\newcommand{\pfire}{P_{\mathrm{fire}}}

\begin{document}

\twocolumn[
\begin{center}
{\Large\bfseries Certified by Abstention: Distribution-Free Guarantees for\\ Chain-of-Thought Verifiers at Small Calibration Budgets\par}
\vspace{1.4em}
{\large Arjun Balaji\par}
\vspace{0.3em}
{Columbia University\par}
{\texttt{arjun.balaji.02@gmail.com}\par}
\vspace{1.8em}
\end{center}
]

\begin{abstract}
Signals that predict whether a chain-of-thought (CoT) trace is correct are compared by AUC, but deploying one requires a threshold with a guarantee. We ask what distribution-free selective guarantees deliver for CoT verifiers at realistic calibration budgets of tens to a few hundred labelled problems, using seven open models, five verifier signals and 37{,}000 graded traces. The central observation is \emph{validity by abstention}: an $(\alpha,\delta)$-valid procedure that issues a certificate with probability $\pfire$ bounds the failure probability of an issued certificate only by $\delta/\pfire$, so a certificate that rarely fires can be valid and wrong every time it is used. In a simulation with known risk the standard certificate fails in at most $0.3\%$ of calibration draws but in up to $69\%$ of those in which it fires. A \emph{certification floor} and a \emph{lattice condition} for Benjamini--Hochberg conformal selection explain why certificates abstain at these budgets, and the data bear them out: the standard certificate returns nothing or a large accepted set, and an unreadable residual-stream probe buys two to three times the coverage of the readable signals, an edge a cross-fitted reconstruction cannot recover linearly from the readable features. We then give a \emph{floor-started fixed-sequence certificate}, valid without monotonicity assumptions, that covers more than the Bonferroni certificate on every model--signal pair and raises coverage at the non-vacuous target $0.75\pizero$ from $0.05$ to $0.16$, although the floor keeps absolute coverage small. Finally, a certificate cannot see what matters after deployment: under benchmark shift the error among accepted traces tracks the new task's base error, and under best-of-$n$ selection against the verifier it rises past the target while the empirical failure frequency stays below $\delta$, because abstention absorbs the failures.
\end{abstract}

\section{Introduction}

Correctness of a language model's reasoning can be predicted from cheap signals: the model's token probabilities \citep{kadavath2022know}, linear probes on hidden states \citep{azaria2023internal,burns2023discovering,zhang2025reasoning}, or readouts that decode intermediate activations into vocabulary space \citep{nostalgebraist2020logit,belrose2023tuned,gurnee2026workspace}. Such \emph{verifier signals} are compared almost exclusively by ranking metrics. A deployed verifier, however, is a threshold: traces scoring above it are accepted without further checking, and the operator needs to know how often an accepted trace is wrong.

Distribution-free methods promise this. Given a labelled calibration set, selective certificates \citep{geifman2017selective}, risk-controlling procedures \citep{bates2021rcps,angelopoulos2021ltt} and conformal selection \citep{jin2023selection,gui2024conformal} return an acceptance rule whose error is bounded with finite-sample validity under exchangeability alone. Their validity is not in question. What is rarely reported is what they \emph{deliver} when labels are scarce, the target is non-vacuous, and the accepted traces are then optimized against. For CoT verification, calibration labels require graded solutions, and realistic budgets are tens to a few hundred problems. GSM8K's automatic grading makes it a convenient stand-in for the tasks that motivate this regime, where each label costs an expert's time; we therefore use 167--395 problems per model, the scale of a hand-graded set, and study larger budgets in simulation.

We study this regime with seven open-weight models (0.8B--8B parameters, three families), five verifier signals spanning the interpretability spectrum, and about 37{,}000 graded traces. The target is the \emph{selective error} $P(\text{wrong}\mid\text{accepted})$. A target $\alpha$ is \emph{non-vacuous} if it lies strictly below the model's unconditional error rate $\pizero$.

\paragraph{Contributions.}
\textbf{(1) Validity by abstention} (\S\ref{sec:theory}). An $(\alpha,\delta)$-valid procedure that issues a certificate with probability $\pfire$ bounds the failure probability of an \emph{issued} certificate only by $\delta/\pfire$ (Proposition~\ref{prop:abstain}). A certificate that rarely fires can therefore be valid and wrong every time it is used, and in a simulation with known risk the standard certificate does exactly that: it fails in at most $0.3\%$ of draws overall and in up to $69\%$ of the draws in which it fires, with a median overshoot of 4--12\% of the target (\S\ref{sec:synthetic}).
\textbf{(2) Why certificates abstain at small budgets} (\S\ref{sec:theory}, \S\ref{sec:attain}). A \emph{certification floor} (Lemma~\ref{lem:floor}) gives the smallest accepted calibration set that can certify a target, and a \emph{lattice condition} (Proposition~\ref{prop:lattice}) decides exactly when BH conformal selection can reject anything. On seven models with 67--158 calibration problems the standard certificate is bimodal, abstaining or accepting a large set; at the non-vacuous target $0.75\pizero$ it fires in at most $10\%$ of calibration draws for 15 of 21 model--signal pairs, and below the boundary only one readable signal on one model reaches $10\%$ coverage. An unreadable residual probe buys two to three times the coverage of the readable signals at the boundary target and more below it, and a cross-fitted reconstruction shows that this edge is not linearly recoverable from the 119 readable features. The exact binomial bound is the tightest non-randomized bound on its statistic, so the lever is the multiplicity correction, not the bound.
\textbf{(3) A certificate that starts at the floor} (\S\ref{sec:fs}). FS-floor is fixed-sequence Learn-then-Test \citep{angelopoulos2021ltt} with thresholds fixed from the training split, a start at the floor of Lemma~\ref{lem:floor}, and a start constant chosen by simulation before calibration. It is valid with no monotonicity assumption and covers more than the Bonferroni certificate on all 21 model--signal pairs at every target: at the non-vacuous target $0.75\pizero$ mean coverage rises from $0.05$ to $0.16$ and the certificate fires in $28\%$ rather than $10\%$ of calibration draws. The floor still binds, and a budget curve (Figure~\ref{fig:budget}) shows what larger calibration sets would buy.
\textbf{(4) What a certificate cannot see} (\S\ref{sec:shift}, \S\ref{sec:pressure}). Under benchmark shift the error among accepted traces tracks the new task's base error (Spearman $0.88$): the target is exceeded in 40 of the 45 cells where the new task is harder and in none of the 27 where it is easier; under best-of-$n$ selection against the verifier it rises with $n$ past the target for several signals while the empirical failure frequency stays below $\delta$, because abstention absorbs the failures; the selected trace's accuracy degrades.

\section{Setting}
\label{sec:setting}

\paragraph{Traces, labels, scores.} For a problem $x$ a model samples a CoT trace $z$ at temperature $0.8$; $Y=1$ if its final answer matches the gold answer. A verifier signal is a map $s(z)\in\mathbb{R}$, fit on a training split and then frozen. We accept $z$ when $s(z)\ge\lambda$. The \emph{selective risk} and \emph{coverage} of a threshold are
\begin{align*}
\Rsel(\lambda)&=P\big(Y=0\mid s(Z)\ge\lambda\big),\\
C(\lambda)&=P\big(s(Z)\ge\lambda\big),
\end{align*}
with $\Rsel(\infty)=0$. Let $\pizero=P(Y=0)$. A procedure mapping a calibration set to $\lamhat$ is \emph{$(\alpha,\delta)$-valid} if $P(\Rsel(\lamhat)\le\alpha)\ge1-\delta$. It \emph{fires} when $\lamhat<\infty$, and $\pfire=P(\lamhat<\infty)$. Our primary target is the boundary $\alpha=\pizero$; we also report $\alpha\in\{0.5\pizero,0.75\pizero\}$ and $\alpha\in\{0.05,0.10,0.20\}$. At $\alpha=\pizero$ accepting everything is itself a correct answer, so this target measures whether the certificate can confirm what is true by construction; the targets below $\pizero$ are the non-vacuous ones, and even they can be loose in absolute terms (for the 33\%-accurate model, $0.75\pizero$ is an error rate of $0.50$).

\paragraph{Signals.} Each signal is a regularized logistic regression on a fixed feature set (Appendix~\ref{app:signals}).
\emph{Confidence}: 5 summary statistics of the token log-probabilities of the trace.
\emph{J-lens}: 51 trajectory features (velocity, curvature, entropy and churn of the top-20 decoded tokens across reasoning steps) of residual-stream states decoded with the Jacobian lens of \citet{gurnee2026workspace}.
\emph{Logit lens}: the same features with the Jacobian transport disabled \citep{nostalgebraist2020logit}.
\emph{Surface}: geometric features of sentence embeddings \citep{reimers2019sentence} of the written steps.
\emph{Probe}: a linear probe on mid-depth residual vectors (mean over step positions, concatenated with the answer position).
The first four have human-inspectable features; the probe does not.

\paragraph{The certificate.} Our main procedure is a Clopper--Pearson (CP) selective threshold \citep{clopper1934use,geifman2017selective}. Sort the $n$ calibration scores decreasingly, $s_{(1)}\ge\dots\ge s_{(n)}$, and let $e_k$ count incorrect traces among the top $k$. For a grid $\mathcal K\subset\{10,\dots,n\}$ of $G\le20$ equally spaced sizes fixed in advance, test the $k-1$ traces strictly above $s_{(k)}$:
\begin{equation}
\begin{aligned}
\hat k&=\max\{k\in\mathcal K:\ \Ucp(e_{k-1},k-1;\delta/G)\le\alpha\},\\ \lamhat&=s_{(\hat k)},
\end{aligned}
\label{eq:cert}
\end{equation}
accept a test trace when $s(z)>\lamhat$, and abstain ($\lamhat=\infty$) if the set is empty. $\Ucp(e,k;\delta')$ is the one-sided $1-\delta'$ CP upper limit, the $1-\delta'$ quantile of $\mathrm{Beta}(e+1,k-e)$. We use $\delta=0.10$.

\begin{proposition}[Validity]
\label{prop:valid}
Let calibration traces be i.i.d.\ with continuous scores. Then rule \eqref{eq:cert} is $(\alpha,\delta)$-valid.
\end{proposition}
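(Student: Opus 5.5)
The plan is a union bound over the $G$ grid sizes, combined with an exact conditional binomial law for $e_{k-1}$ given the $k$-th order statistic. First, reduce the failure event to events indexed by $k$. If the procedure abstains, then $\Rsel(\lamhat)=\Rsel(\infty)=0\le\alpha$, so it cannot fail. Otherwise $\lamhat=s_{(\hat k)}$ and the test rule accepts $s(z)>\lamhat$. With continuous scores, $P(s(Z)=\lamhat)=0$, so the risk of the rule ``accept if $s(z)>t$'' coincides with $\Rsel(t)$ as defined with $\ge$. Write $\rho(t)=P(Y=0\mid s(Z)>t)$. Then
\[
\{\Rsel(\lamhat)>\alpha\}\subseteq\bigcup_{k\in\mathcal K}A_k,\qquad A_k=\bigl\{\Ucp(e_{k-1},k-1;\delta/G)\le\alpha,\ \rho(s_{(k)})>\alpha\bigr\}.
\]
It therefore suffices to show $P(A_k)\le\delta/G$ for each fixed $k$, and to sum over the at most $G$ values of $k$.

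The second step is the conditional law. Fix $k$ and condition on $s_{(k)}=t$. The claim is that the $k-1$ calibration pairs $(s_i,Y_i)$ with $s_i>t$ are then i.i.d.\ draws from the law of $(s(Z),Y)$ conditioned on $s(Z)>t$. This is the standard Markov property of order statistics: given the $k$-th largest value, the points above it are an i.i.d.\ sample from the distribution truncated to $(t,\infty)$. It extends to the marked pairs because the labels ride along with their scores, and it needs continuity so that ties occur with probability zero. Consequently, given $s_{(k)}=t$, we have $e_{k-1}\sim\mathrm{Bin}(k-1,\rho(t))$.

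The third step applies Clopper--Pearson validity conditionally. For $X\sim\mathrm{Bin}(m,p)$, the one-sided upper limit satisfies $P(\Ucp(X,m;\delta')<p)\le\delta'$. On the event $\rho(t)>\alpha$ we have
\[
P\bigl(\Ucp(e_{k-1},k-1;\delta/G)\le\alpha\mid s_{(k)}=t\bigr)\le P\bigl(\Ucp<\rho(t)\mid s_{(k)}=t\bigr)\le\delta/G.
\]
Integrating over $t$, with the indicator $\mathbf 1\{\rho(t)>\alpha\}$ inside the expectation, gives $P(A_k)\le\delta/G$. The union bound then yields $P(\Rsel(\lamhat)>\alpha)\le\delta$.

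The main obstacle is the second step: the conditioning argument must be stated carefully for marked order statistics. I would do this by computing the joint density of $(s_{(k)},\text{top-}(k-1)\text{ pairs})$ directly and checking that it factorizes. This route also makes clear why the procedure tests the $k-1$ traces strictly above $s_{(k)}$ rather than the top $k$: including the $k$-th point itself would break the conditional i.i.d.\ structure.
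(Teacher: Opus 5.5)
Your proposal is correct and follows the paper's proof essentially step for step. Conditional on $s_{(k)}=t$, the $k-1$ traces strictly above $t$ are i.i.d.\ from the law given $s(Z)>t$, so $e_{k-1}\sim\mathrm{Bin}(k-1,\Rsel(t))$; Clopper--Pearson validity then holds conditionally and hence unconditionally; a union bound over the $G$ grid sizes gives $\delta$; and abstention cannot fail because $\Rsel(\infty)=0$. The only difference is that the paper cites the conditional structure of order statistics as standard instead of checking it by a density factorization, and that factorization needs a density (or a probability integral transform) if the score law is merely atomless.
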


The proof (Appendix~\ref{app:proofs}) uses only that, conditional on $s_{(k)}=t$, the $k-1$ traces above $t$ are i.i.d.\ from the law of $(Y,s(Z))$ given $s(Z)>t$, so $e_{k-1}\sim\mathrm{Bin}(k-1,\Rsel(t))$, and a union bound over $\mathcal K$. Implementations often count the boundary trace as well, testing $(e_k,k)$ and accepting $s\ge\lamhat$; that variant needs $P(Y=0\mid s(Z)=t)\ge\Rsel(t)$ for all $t$ and is compared in Appendix~\ref{app:exact}.

\paragraph{Alternatives.} (i) The same construction with the CP limit replaced by the Hoeffding--Bentkus p-value \citep{bates2021rcps,bentkus2004hoeffding,angelopoulos2021ltt} or a mixture betting e-value \citep{waudby2024betting}, on the same grid with the same split. (ii) Conformal selection \citep{jin2023selection}: p-values $p_j=(1+\#\{i:Y_i=0,\,s_i\ge s_j\})/(n_0+1)$ against the $n_0$ incorrect calibration traces, then Benjamini--Hochberg \citep{benjamini1995controlling} at level $\alpha$, which controls the false discovery rate among accepted traces. Conformal risk control \citep{angelopoulos2024crc} does not apply directly, since selective error is not monotone in $\lambda$.

\paragraph{Protocol.}\footnote{Code, data and a one-command reproduction of every table and figure: \url{https://github.com/ArjunBalaji79/certified-by-abstention-code}.} Problems are split 30/40/30\% into train (scorer fitting), calibration and test, \emph{by problem}. Traces from one problem are dependent, so the unit of analysis is \emph{one trace per problem}, redrawn in every replicate; calibration and test units are then exchangeable. Each configuration is evaluated over 1{,}000 replicates (200 resplits for each of 5 scorer seeds). The relative targets $\alpha=\rho\pizero$ use the base error rate of the \emph{training} partition of the same seed, so every target is fixed before any calibration trace is seen; in the experimental sections $\pizero$ denotes this estimate (five values per model, spanning about $\pm0.02$), and the held-out base error serves only to classify targets as vacuous or not. For each we report mean coverage over all replicates, $\pfire$, and $\hat R$, the error rate among accepted test traces pooled over firing replicates (a coverage-weighted estimate of $E[\Rsel(\lamhat)\mid\text{fire}]$).

\section{Three Structural Facts}
\label{sec:theory}

The statements below have short proofs. We give them because they account for most of what we observe and, to our knowledge, have not been stated for verifier certification.

\begin{lemma}[Certification floor]
\label{lem:floor}
Rule \eqref{eq:cert} can fire only if some $k\in\mathcal K$ satisfies $k-1\ge a^\star_0$, where
\[
a^\star_0(\alpha,\delta/G):=\left\lceil \frac{\log (G/\delta)}{-\log(1-\alpha)}\right\rceil ,
\]
and this $k$ suffices only if the $k-1$ calibration traces above $s_{(k)}$ contain no error. Certifying an accepted set with $e$ errors requires $k-1\ge a^\star_e$, the smallest $m$ with $\Ucp(e,m;\delta/G)\le\alpha$, which increases with $e$.
\end{lemma}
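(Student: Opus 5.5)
The plan is to reduce everything to monotonicity properties of the Clopper--Pearson upper limit and then read off the floor from the closed form at $e=0$. Write $\delta'=\delta/G$. Rule \eqref{eq:cert} fires iff some $k\in\mathcal K$ satisfies $\Ucp(e_{k-1},k-1;\delta')\le\alpha$. I will establish two facts about $U(e,m):=\Ucp(e,m;\delta')$, the $1-\delta'$ quantile of $\mathrm{Beta}(e+1,m-e)$:
\begin{enumerate}
\item[(a)] $U(e,m)$ is nondecreasing in $e$ for fixed $m$.
\item[(b)] $U(e,m)$ is nonincreasing in $m$ for fixed $e$.
\end{enumerate}
Both will follow from the binomial--beta duality: $U(e,m)$ is the unique $p$ with $P(\mathrm{Bin}(m,p)\le e)=\delta'$. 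Since $P(\mathrm{Bin}(m,p)\le e)$ is decreasing in $p$, increasing in $e$, and decreasing in $m$ (stochastic ordering of binomials in $m$), the root $p$ moves in the claimed directions.

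\textbf{The floor.} Given (a), for any $k$ we have $U(e_{k-1},k-1)\ge U(0,k-1)$. For $e=0$ the Beta$(1,m)$ law has CDF $1-(1-p)^m$, so $U(0,m)=1-\delta'^{1/m}$. Hence $U(0,m)\le\alpha$ iff $(1-\alpha)^m\le\delta'$, that is, $m\ge\log(1/\delta')/(-\log(1-\alpha))=\log(G/\delta)/(-\log(1-\alpha))$. Since $m$ is an integer, this is $m\ge a^\star_0$. Any firing $k$ therefore has $k-1\ge a^\star_0$. Conversely, if $k-1\ge a^\star_0$ but $e_{k-1}\ge1$, then the test may fail. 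The ``only if'' clause for $k$ near the floor is the statement that success at $k-1<a^\star_1$ forces $e_{k-1}=0$; more precisely, at $k-1=a^\star_0$ exactly, certification requires zero errors because $a^\star_1>a^\star_0$. This will follow from the general monotonicity below.

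\textbf{General $e$.} Define $a^\star_e=\min\{m:U(e,m)\le\alpha\}$. By (b), $\{m:U(e,m)\le\alpha\}$ is an up-set, so certifying a set with $e$ errors requires $k-1\ge a^\star_e$. To show $a^\star_e$ increases with $e$, use (a): if $U(e+1,m)\le\alpha$ then $U(e,m)\le\alpha$, so $a^\star_{e+1}\ge a^\star_e$. For strict increase, I would show $U(e+1,m)>U(e,m)$ strictly. This holds because $P(\mathrm{Bin}(m,p)\le e+1)-P(\mathrm{Bin}(m,p)\le e)=P(\mathrm{Bin}=e+1)>0$ for $p\in(0,1)$ and $e+1\le m$. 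Then note that at $m=a^\star_e$ we need $U(e+1,m)\le\alpha$ to fail. That is not automatic from strictness alone, so I will state ``increases'' as nondecreasing, which is what the lemma needs, and remark that it is strict generically.

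\textbf{Main obstacle.} The only non-routine step is (b): monotonicity in $m$ with edge cases ($e=m$, where the Beta parameter $m-e=0$ and one sets $U=1$). I would handle it via a coupling: $\mathrm{Bin}(m+1,p)$ dominates $\mathrm{Bin}(m,p)$, so $P(\mathrm{Bin}(m+1,p)\le e)\le P(\mathrm{Bin}(m,p)\le e)$ and the root decreases. Everything else is the closed form at $e=0$ and taking ceilings.
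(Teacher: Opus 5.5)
Your route is the paper's. The closed form $\Ucp(0,m;\delta')=1-\delta'^{1/m}$ gives the floor. Monotonicity of the CP limit in $e$ gives $U(e_{k-1},k-1)\ge U(0,k-1)$ and $a^\star_{e+1}\ge a^\star_e$; your binomial--beta duality is the same fact as the paper's stochastic ordering of $\mathrm{Beta}(e+1,m-e)$ in $e$. Step (b) is correct but not needed for what the lemma asserts. $U(e,k-1)\le\alpha$ already forces $k-1\ge a^\star_e$ by minimality of $a^\star_e$; (b) only buys the converse, that every $m\ge a^\star_e$ certifies.

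\textbf{The gap.} It concerns the clause that a $k$ at the floor suffices only with zero errors. You correctly reduce this to $a^\star_1>a^\star_0$ and say it will follow from the general monotonicity. But you then prove only $a^\star_{e+1}\ge a^\star_e$ and call strictness generic. Weak monotonicity does not give the clause: if $a^\star_1=a^\star_0$, a floor-sized accepted set containing one error would certify. The paper's one-line proof is also silent on this point, but your argument explicitly relies on it.

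\textbf{The fix.} Strictness always holds and takes one line. Use $U(e,m)\le\alpha\iff P\{\mathrm{Bin}(m,\alpha)\le e\}\le\delta'$ and write $\mathrm{Bin}(m,\alpha)\overset{d}{=}\mathrm{Bin}(m-1,\alpha)+B$ with $B\in\{0,1\}$. Then
\[
P\{\mathrm{Bin}(m,\alpha)\le e+1\}\ \ge\ P\{\mathrm{Bin}(m-1,\alpha)\le e\}.
\]
At $m=a^\star_e$ the right side exceeds $\delta'$ by minimality of $a^\star_e$ (note $a^\star_e-1\ge e$, since $U(e,e)=1>\alpha$). Hence $U(e+1,a^\star_e)>\alpha$ and $a^\star_{e+1}\ge a^\star_e+1$.

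For $e=0$ the computation is explicit. Since $a^\star_0-1$ lies below the floor, $(1-\alpha)^{a^\star_0-1}>\delta'$. So at $m=a^\star_0$,
\[
P\{\mathrm{Bin}(m,\alpha)\le 1\}=(1-\alpha)^{m-1}\bigl(1+(m-1)\alpha\bigr)\ge(1-\alpha)^{m-1}>\delta'.
\]
Combined with (a) and the definition of $a^\star_1$, any firing $k$ with $k-1<a^\star_1$, in particular $k-1=a^\star_0$, has $e_{k-1}=0$.
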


\begin{proof}
$\Ucp(0,m;\delta')=1-\delta'^{1/m}\le\alpha$ iff $m\ge\log\delta'/\log(1-\alpha)$. Monotonicity in $e$ follows from the stochastic ordering of $\mathrm{Beta}(e+1,k-e)$ in $e$.
\end{proof}

At $\alpha=\delta=0.10$, $G=20$ the floor is $a_0^\star=51$ error-free traces above the threshold. For a model with $\pizero=0.0886$ calibrated on 74 problems (our Qwen3-8B cell) at $\alpha=\pizero$, the floor is 58: a score with AUC near $0.6$ must place none of about seven calibration errors among the 58 traces above the threshold, 78\% of the calibration set. Figure~\ref{fig:floor} plots $a^\star$ for accepted sets with error fraction $0$, $\alpha/4$ and $\alpha/2$: at half the target, $\alpha=0.10$ needs about 195 accepted calibration points, more than any calibration set here; dropping the Bonferroni grid (dashed) lowers the floor by a factor of about $2.3$.

\begin{figure}[t]
\centering
\includegraphics[width=\columnwidth]{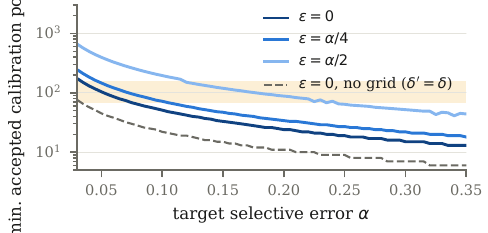}
\caption{Certification floor (Lemma~\ref{lem:floor}): minimum number of calibration traces above the threshold for the CP certificate to certify $\alpha$ ($\delta=0.10$, $G=20$) when a fraction $\varepsilon$ of them are errors. Shaded: calibration-set sizes in this study (67--158). Analytic; no data.}
\label{fig:floor}
\end{figure}

\begin{proposition}[Lattice condition for conformal selection]
\label{prop:lattice}
Let $p_1,\dots,p_m$ be conformal p-values against $n_0$ calibration nulls, with empirical c.d.f.\ $\hat F_m$. BH at level $\alpha$ makes either no rejection or at least $\lceil m/(\alpha(n_0+1))\rceil$ rejections, and it rejects if and only if
\[
\exists\,k\in\{1,\dots,n_0+1\}:\quad \hat F_m\!\Big(\frac{k}{n_0+1}\Big)\ \ge\ \frac{k}{\alpha\,(n_0+1)} .
\]
In particular, if a fraction $q\ge 1/(\alpha(n_0+1))$ of test traces outscores every calibration null, BH rejects.
\end{proposition}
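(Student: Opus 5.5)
The plan rests on the fact that conformal p-values computed against $n_0$ nulls lie on the lattice $\{k/(n_0+1):k=1,\dots,n_0+1\}$. With $p_j=(1+\#\{i:Y_i=0,s_i\ge s_j\})/(n_0+1)$, every $p_j$ has this form. So the empirical c.d.f.\ $\hat F_m$ is a step function whose jumps occur only at lattice points. I will also write BH in its step-up form. Let $p_{(1)}\le\dots\le p_{(m)}$ be the sorted p-values. BH rejects the $r$ smallest, where $r=\max\{j: p_{(j)}\le \alpha j/m\}$, and $r=0$ if no such $j$ exists. Equivalently, $r$ is the largest $j$ such that at least $j$ p-values lie at or below $\alpha j/m$.

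\textbf{Step 1 (reformulation via $\hat F_m$).} I will show that BH rejects iff there is a threshold $t$ with $\hat F_m(t)\ge t/\alpha$.

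For the forward direction, suppose $r\ge1$ and put $t=p_{(r)}$. Then $m\hat F_m(t)\ge r\ge m t/\alpha$.

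For the converse, suppose $\hat F_m(t)\ge t/\alpha$ and set $j=m\hat F_m(t)\ge 1$ (note $t>0$ forces $\hat F_m(t)>0$). Then $p_{(j)}\le t\le \alpha j/m$, so $r\ge j$.

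It remains to restrict $t$ to the lattice. Between consecutive lattice points $\hat F_m$ is constant while $t/\alpha$ increases. So if the inequality holds at some $t$, it also holds at the largest lattice point $\le t$. There is always one, because any $t$ satisfying the inequality has $\hat F_m(t)>0$ and hence $t\ge 1/(n_0+1)$. Lattice points above $1$ need not be checked, since they give $t/\alpha>1$. This yields the stated criterion with $k\in\{1,\dots,n_0+1\}$.

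\textbf{Step 2 (minimum rejection count).} If BH rejects $r\ge1$ hypotheses, then $p_{(r)}\le\alpha r/m$. Since $p_{(r)}\ge 1/(n_0+1)$, this gives $r\ge m/(\alpha(n_0+1))$. Because $r$ is an integer, $r\ge\lceil m/(\alpha(n_0+1))\rceil$, so the rejection count is either $0$ or at least this bound.

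\textbf{Step 3 (sufficient condition).} A test trace that outscores every null has $p_j=1/(n_0+1)$. Hence $\hat F_m(1/(n_0+1))\ge q$. The criterion at $k=1$ then holds whenever $q\ge 1/(\alpha(n_0+1))$.

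The only delicate point is Step 1's reduction to lattice points. It needs the right-continuity of $\hat F_m$ and the observation that $t/\alpha$ increases while $\hat F_m$ stays flat between lattice points. Ties among scores do not affect any of this, because the lattice structure holds regardless.
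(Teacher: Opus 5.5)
Your proof is correct and is essentially the paper's argument: the same witnesses $t=p_{(r)}$ and $j=m\hat F_m(t)$ for the equivalence, the bound $p_{(r)}\ge 1/(n_0+1)$ for the minimum rejection count, and the case $k=1$ for the final claim. The only difference is that you first work with an arbitrary threshold $t>0$ and then reduce to lattice points. The paper skips that reduction by noting that $p_{(r)}$ already lies on the lattice, so $k=(n_0+1)p_{(r)}$ is directly an admissible index.
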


The proof (Appendix~\ref{app:proofs}) uses only that conformal p-values lie on the lattice $\{k/(n_0+1)\}$; the same granularity governs the finite-calibration behaviour of BH with empirical null p-values analysed by \citet{mary2022semisupervised}, whose FDR bounds are exact when $\alpha(n_0+1)/m$ is an integer and whose power analysis likewise needs $n_0\gtrsim m/\max(1,k)$ for $k$ detectable alternatives. With $n_0=25$ and $\alpha=0.10$, BH must accept at least $38.5\%$ of the test set at once or nothing. The requirement $n_0\gtrsim1/(\alpha q)$ is perverse for verification: it is a lower bound on the number of calibration \emph{errors}, it grows as the target tightens, and a more accurate model both forces a tighter non-vacuous target and supplies fewer errors.

\begin{proposition}[Validity by abstention]
\label{prop:abstain}
If a procedure is $(\alpha,\delta)$-valid and fires with probability $\pfire>0$, then
\[
P\big(\Rsel(\lamhat)>\alpha\ \big|\ \text{fire}\big)\ \le\ \min\{1,\ \delta/\pfire\},
\]
and for $\pfire\le\delta$ this is attained: there are $(\alpha,\delta)$-valid procedures whose every issued certificate is wrong.
\end{proposition}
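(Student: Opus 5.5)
The bound itself is elementary conditioning. We will use the convention $\Rsel(\infty)=0$ from Section~\ref{sec:setting}: an abstaining procedure is never wrong, so the failure event $\{\Rsel(\lamhat)>\alpha\}$ is contained in the firing event $\{\lamhat<\infty\}$. Hence
\[
P\big(\Rsel(\lamhat)>\alpha\mid\text{fire}\big)=\frac{P(\Rsel(\lamhat)>\alpha)}{\pfire}\le\frac{\delta}{\pfire},
\]
using $(\alpha,\delta)$-validity in the numerator. The cap at $1$ holds because it is a conditional probability. Note that the first step needs the containment, not merely an inequality, and that containment is exactly where abstention absorbs the failures.

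For attainment when $\pfire\le\delta$, we will give an explicit construction rather than argue abstractly. Fix any threshold $\lambda_{\mathrm{bad}}<\infty$ with $\Rsel(\lambda_{\mathrm{bad}})>\alpha$. Such a threshold exists whenever the target is non-vacuous: for $\alpha<\pizero$ we can take $\lambda_{\mathrm{bad}}=-\infty$, which accepts everything and has risk $\pizero>\alpha$. Given a target firing probability $p\in(0,\delta]$, define the procedure that, on an independent auxiliary coin $B\sim\mathrm{Bernoulli}(p)$, outputs $\lamhat=\lambda_{\mathrm{bad}}$ if $B=1$ and $\lamhat=\infty$ otherwise. If one prefers a non-randomized procedure, $B$ can instead be any calibration-measurable event of probability $p$, for example a quantile event of a continuous score, which is available under the continuity assumption of Proposition~\ref{prop:valid}.

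For this procedure $P(\Rsel(\lamhat)>\alpha)=p\le\delta$, so it is $(\alpha,\delta)$-valid. It fires with probability $\pfire=p$, and $P(\text{fail}\mid\text{fire})=1$. Taking $p=\delta$ then gives $\delta/\pfire=1=\min\{1,\delta/\pfire\}$, so the bound is attained.

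The only real obstacle is the edge case in which no finite threshold has risk exceeding $\alpha$. This happens at vacuous targets, for instance $\alpha\ge\pizero$ when the selective risk never exceeds $\pizero$. There no procedure can fail, and the attainment claim must be read as holding for non-vacuous targets, or more generally whenever some $\lambda$ has $\Rsel(\lambda)>\alpha$. The plan is to state this hypothesis explicitly in the proof.

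A secondary point concerns the reading of ``for $\pfire\le\delta$ this is attained'' when $\pfire<\delta$. The construction with $p<\delta$ still has every issued certificate wrong, so the conditional failure probability is $1=\min\{1,\delta/\pfire\}$, and this matches the bound.
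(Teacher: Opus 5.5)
Your proposal is correct and follows the paper's proof: the containment $\{\Rsel(\lamhat)>\alpha\}\subseteq\{\text{fire}\}$ from $\Rsel(\infty)=0$, then division by $\pfire$, and attainment via a data-ignoring procedure that accepts everything with probability $\pfire\le\delta$ on a distribution with $\pizero>\alpha$. Your explicit non-vacuity hypothesis is the same condition the paper builds in by choosing such a distribution.
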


\begin{proof}
$\Rsel(\infty)=0$, so $\{\Rsel(\lamhat)>\alpha\}\subseteq\{\text{fire}\}$ and the bound follows on dividing by $\pfire$. For attainment take any distribution with $\pizero>\alpha$ and the procedure that ignores the data, accepts everything with probability $\pfire\le\delta$, and otherwise abstains.
\end{proof}

Proposition~\ref{prop:abstain} says nothing against validity: validity is a statement about the calibration draw, abstentions included, whereas an operator acts only on certificates that are issued. When $\pfire\le\delta$ the guarantee is compatible with every issued certificate being wrong, a second sense, beyond $\alpha\ge\pizero$, in which it can be vacuous. The statement is elementary, and that is the point: certificates are reported by coverage averaged over calibration draws, or by error averaged with empty certificates counted as zero, and neither statistic can expose this regime; $\pfire$ and the error of issued certificates do.

\section{Attainability}
\label{sec:attain}

\paragraph{Data.} Seven models (Qwen3.5-0.8B/4B, Qwen3-1.7B/4B/8B, Qwen2.5-7B-Instruct, OLMo-3-7B) generate 2--4 traces per GSM8K test problem \citep{cobbe2021gsm8k} under a plain two-shot prompt: 295--1{,}387 usable traces over 167--395 problems per model, hence 67--158 calibration problems. Accuracy ranges from $0.33$ to $0.91$.

\begin{table*}[t]
\centering\scriptsize
\caption{Rule~\eqref{eq:cert} on GSM8K, $\delta=0.10$, 1{,}000 replicates per cell. $n_{\mathrm{cal}}$: calibration problems; $\hat\pi_0$: training-partition base error (sets the targets); $\pi_0^{\mathrm{held}}$: base error of the remaining problems. Per signal: mean coverage, $P_{\mathrm{fire}}$ and $\hat R/\alpha$ (error among accepted test traces pooled over firing replicates, relative to the target; -- if fewer than 50 replicates fire) at $\alpha=\hat\pi_0$, and coverage and $P_{\mathrm{fire}}$ at $0.75\hat\pi_0$. Test AUCs: Appendix Table~\ref{tab:bh}.}
\label{tab:main}
\setlength{\tabcolsep}{2.6pt}
\begin{tabular}{lrrr rrrrr rrrrr rrrrr}
\toprule
 & & & & \multicolumn{5}{c}{Confidence} & \multicolumn{5}{c}{J-lens} & \multicolumn{5}{c}{Probe} \\
\cmidrule(lr){5-9}\cmidrule(lr){10-14}\cmidrule(lr){15-19}
 & & & & \multicolumn{3}{c}{$\alpha=\hat\pi_0$} & \multicolumn{2}{c}{$0.75\hat\pi_0$} & \multicolumn{3}{c}{$\alpha=\hat\pi_0$} & \multicolumn{2}{c}{$0.75\hat\pi_0$} & \multicolumn{3}{c}{$\alpha=\hat\pi_0$} & \multicolumn{2}{c}{$0.75\hat\pi_0$} \\
\cmidrule(lr){5-7}\cmidrule(lr){8-9}\cmidrule(lr){10-12}\cmidrule(lr){13-14}\cmidrule(lr){15-17}\cmidrule(lr){18-19}
Model & $n_{\mathrm{cal}}$ & $\hat\pi_0$ & $\pi_0^{\mathrm{held}}$ & Cov. & $P_{\mathrm{fire}}$ & $\hat R/\alpha$ & Cov. & $P_{\mathrm{fire}}$ & Cov. & $P_{\mathrm{fire}}$ & $\hat R/\alpha$ & Cov. & $P_{\mathrm{fire}}$ & Cov. & $P_{\mathrm{fire}}$ & $\hat R/\alpha$ & Cov. & $P_{\mathrm{fire}}$ \\
\midrule
Qwen3.5-0.8B & 158 & 0.64 & 0.68 & 0.22 & 0.578 & 0.83 & 0.01 & 0.045 & 0.08 & 0.251 & 0.91 & 0.00 & 0.013 & 0.49 & 0.963 & 0.79 & 0.06 & 0.380 \\
Qwen3.5-4B & 154 & 0.16 & 0.16 & 0.39 & 0.571 & 0.63 & 0.11 & 0.220 & 0.18 & 0.239 & 0.79 & 0.02 & 0.045 & 0.72 & 0.934 & 0.53 & 0.39 & 0.618 \\
OLMo-3-7B & 67 & 0.34 & 0.34 & 0.33 & 0.533 & 0.67 & 0.10 & 0.192 & 0.08 & 0.160 & 0.90 & 0.01 & 0.021 & 0.34 & 0.538 & 0.68 & 0.09 & 0.208 \\
Qwen3-1.7B & 78 & 0.21 & 0.22 & 0.09 & 0.115 & 0.92 & 0.01 & 0.012 & 0.04 & 0.054 & 1.07 & 0.00 & 0.006 & 0.21 & 0.293 & 0.72 & 0.05 & 0.094 \\
Qwen2.5-7B-It & 71 & 0.22 & 0.18 & 0.15 & 0.187 & 0.76 & 0.01 & 0.016 & 0.16 & 0.240 & 0.73 & 0.03 & 0.061 & 0.23 & 0.313 & 0.68 & 0.03 & 0.058 \\
Qwen3-4B & 153 & 0.10 & 0.09 & 0.20 & 0.250 & 0.80 & 0.03 & 0.052 & 0.15 & 0.194 & 0.85 & 0.02 & 0.029 & 0.31 & 0.387 & 0.68 & 0.08 & 0.114 \\
Qwen3-8B & 74 & 0.09 & 0.09 & 0.02 & 0.020 & -- & 0.00 & 0.002 & 0.01 & 0.016 & -- & 0.00 & 0.001 & 0.04 & 0.051 & 0.93 & 0.00 & 0.005 \\
\bottomrule
\end{tabular}
\end{table*}

\paragraph{The boundary target.} Table~\ref{tab:main} reports the certificate at $\alpha=\pizero$ and at $0.75\pizero$. The boundary target sits at the edge of vacuity, so it is sensitive to the estimate that defines it: the training-partition $\pizero$ lies above the held-out base error by $0.01$--$0.04$ for three models, below it by $0.01$--$0.03$ for two, and within $0.005$ for two, and coverage at this target moves accordingly. Mean coverage conceals a bimodal outcome: the certificate either abstains or accepts a large set (32--86\% of test traces given firing), and $\pfire$ ranges from $0.02$ to $0.96$ across the 21 pairs. When the certificate fires, $\hat R$ is $0.53$--$1.07$ of the target, close to $1$ for weak signals, which fire by accepting almost everything. The probe covers more than confidence and more than the J-lens signal in all seven models (geometric-mean ratios $1.8$ and $3.3$; sign test $p=0.016$ for each, both significant under BH at 0.10 over the six secondary comparisons of this section and \S\ref{sec:shift}); in absolute terms its coverage exceeds confidence's by up to $0.33$ and the J-lens signal's by up to $0.55$. Confidence covers more than the J-lens signal in six of seven models (ratio $1.9$, $p=0.125$).

\paragraph{Non-vacuous targets.} At $0.75\pizero$, which lies below the held-out base error in every cell, the picture is stark: $\pfire\le\delta$ for 15 of 21 pairs, mean coverage reaches $10\%$ for 2 of 21 (confidence and the probe on Qwen3.5-4B, $0.11$ and $0.39$), and the probe's advantage widens (ratios $3.1$ over confidence and $8.5$ over the J-lens signal). Over all targets below the boundary (Appendix Table~\ref{tab:full}), $10\%$ coverage is reached by 3 of 21 pairs, the probe on Qwen3.5-4B and on Qwen3-1.7B and confidence on Qwen3.5-4B; two more, confidence and the probe on Qwen3.5-0.8B, reach it at the boundary target, which on that model lies below the held-out base error. At $\alpha=0.05$ every signal has coverage below $0.1\%$, as Lemma~\ref{lem:floor} predicts: $a_0^\star=104$ error-free traces above the threshold exceeds four of the seven calibration sets and is two thirds of the largest.

\paragraph{Conformal selection.} At level $\alpha=\pizero$ BH conformal selection makes no rejection in 74\% of replicates on average (Appendix Table~\ref{tab:bh}). The lattice minimum of Proposition~\ref{prop:lattice} ranges from 2\% of the test set (Qwen3.5-0.8B, with 108 calibration errors) to 69\% (Qwen3-4B), and for Qwen3-8B it exceeds the test set, so no rejection is possible. The condition also explains the one success: on Qwen3.5-0.8B, whose calibration sets contain 108 errors, BH rejects in 79--100\% of replicates. The binding quantity is the number of calibration nulls, which for a correctness verifier is the number of calibration \emph{errors}: 6--24 here for six of seven models, against the hundreds of nulls behind the power reported by \citet{gui2024conformal}.

\subsection{Is Low Coverage the Bound's Fault?}
\label{sec:tighter}

No. For fixed $k$ the error count is binomial, a family with monotone likelihood ratio, so the binomial test behind the CP limit is uniformly most powerful among non-randomized tests at its size (Karlin--Rubin); Appendix Table~\ref{tab:bounds} confirms that neither a Hoeffding--Bentkus nor a betting bound exceeds CP in any of the 21 cells.

What does cost coverage is the Bonferroni grid. A walk that tests every size at the full $\delta$, starting at the first size that passes, raises mean coverage from $0.21$ to $0.43$ but has no validity proof, since its start is chosen by the data; the textbook fixed-sequence walk from the strictest threshold \citep{bauer1991multiple} never fires for $\alpha<1-\delta^{1/9}=0.23$ (Lemma~\ref{lem:floor}). Section~\ref{sec:fs} resolves this.

\subsection{Conditional Failure With Known Risk}
\label{sec:synthetic}

Real test sets are too small to tell whether an issued certificate is wrong, so we simulate a Gaussian score model (incorrect scores $N(0,1)$, correct scores $N(\mu,1)$ with $\mu$ set by the AUC), where $\Rsel(\lambda)$ is known, at sizes matched to four of our cells, 20{,}000 calibration draws each (Appendix Table~\ref{tab:synthetic} and Figure~\ref{fig:abstention}). At $\alpha=0.75\pizero$ with $\pizero=0.16$, AUC $0.82$, $n=154$ (the Qwen3.5-4B probe regime), rule \eqref{eq:cert} fires in $56\%$ of draws and an issued certificate is wrong with probability $0.003$. At AUC $0.71$ and $\alpha=0.5\pizero$ it fires in $0.5\%$ of draws and is wrong in $52\%$ of them; at $\pizero=0.22$, AUC $0.76$, $n=78$ and the same target it fires in $0.4\%$ and is wrong in $69\%$. In every setting the unconditional failure probability is at most $0.003$, more than thirty times below $\delta$. The failures are not gross: among failing draws the median $\Rsel(\lamhat)/\alpha$ is $1.04$--$1.12$ and the 90th percentile $1.13$--$1.28$ in the seven settings with at least 29 failures; the eighth fires twice in 20{,}000 draws, both times wrong, at $1.54$ and $1.63$ (Appendix Table~\ref{tab:synthetic}). They need not stay mild: under best-of-32 selection the error among accepted traces reaches $1.12$--$1.52$ of the target in three of four firing pairs while the failure frequency stays below $\delta$ (\S\ref{sec:pressure}). Validity bounds the unconditional failure probability; the operator experiences the conditional one; abstention reconciles the two (Proposition~\ref{prop:abstain}).

\subsection{Score Quality, Not Model Quality}
\label{sec:auc}

One might expect better models to be harder to certify, since a non-vacuous $\alpha$ lies below $\pizero$. Across the seven models, coverage at $\alpha=\pizero$ instead tracks the score's test AUC (Spearman $+0.79$, $p=0.036$, for both probe and confidence) more than model accuracy ($-0.43$ and $-0.39$, both ns). With seven models this is descriptive; budgets should be planned from the verifier's AUC and Lemma~\ref{lem:floor}.

\paragraph{Why the probe wins.} The probe's advantage could be information the readable features cannot express, or better estimation of the same information. A cross-fitted reconstruction of the probe score from the 119 readable features (Appendix~\ref{app:recon}) has mean held-out AUC $0.63$ against $0.75$ for the probe, and the residual has mean AUC $0.66$ ($0.59$--$0.72$ across models), so the edge is not linearly recoverable from the readable features and the coverage gap reflects information rather than tuning, at least at these training sizes.

\section{A Floor-Started Fixed-Sequence Certificate}
\label{sec:fs}

Rule \eqref{eq:cert} pays for a Bonferroni split because its thresholds are order statistics of the calibration scores, so the hypotheses it tests are random. Learn-then-Test \citep{angelopoulos2021ltt} already contains the remedy for a fixed grid: fixed-sequence testing of the grid hypotheses controls the family-wise error with no split and no monotonicity requirement. Two choices separate that recipe from a usable verifier certificate. The thresholds must be fixed without calibration data, which we do from the training split; and the walk must begin where certification is possible at all, since sizes below the floor of Lemma~\ref{lem:floor} cannot certify and a walk that begins there stops at once. FS-floor is fixed-sequence Learn-then-Test with these two choices and a start constant $c$.

\paragraph{Procedure (FS-floor).} Let $\lambda_1>\dots>\lambda_J$ be the $1-q_j$ quantiles of the scorer's \emph{training-split} scores for accept fractions $q_j\in\{0.02,0.04,\dots,1\}$. Let $j_0$ be the first $j$ with $q_j\,n\ge c\,a_0^\star(\alpha,\delta)$. For $j=j_0,j_0+1,\dots$ let $N_j$ and $E_j$ count calibration traces and errors with $s\ge\lambda_j$, and reject $H_j:\Rsel(\lambda_j)>\alpha$ when the exact binomial p-value $P\{\mathrm{Bin}(N_j,\alpha)\le E_j\}$ is at most $\delta$. Stop at the first non-rejection and return the last rejected $\lambda_j$, or abstain if there is none.

\begin{proposition}
\label{prop:fs}
FS-floor is $(\alpha,\delta)$-valid for any score distribution.
\end{proposition}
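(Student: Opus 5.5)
The plan is the standard fixed-sequence argument: condition on the training split, treat the grid hypotheses as fixed, and show that an invalid certificate forces a valid level-$\delta$ test to reject a true null. First, condition on the training split and the scorer. The thresholds $\lambda_1>\dots>\lambda_J$, the start constant $c$, the floor $a_0^\star(\alpha,\delta)$ and the target $\alpha$ (set from the training-partition $\pizero$) are then all deterministic. The calibration size $n$ is also fixed in advance, so $j_0$ is deterministic as well. Given this, the calibration traces are i.i.d. from the law of $(Y,s(Z))$, independent of the conditioning.

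Next, consider each fixed $j$. The pairs $(\mathbf 1\{s\ge\lambda_j\},\,\mathbf 1\{s\ge\lambda_j,Y=0\})$ are i.i.d. across the calibration traces. Conditional on $N_j=m$, the accepted traces are i.i.d. from the law given $s(Z)\ge\lambda_j$. Hence $E_j\mid N_j=m\sim\mathrm{Bin}(m,\Rsel(\lambda_j))$, with $\Rsel$ here taken as the risk of the acceptance rule $s\ge\lambda_j$ that the procedure actually uses. No continuity is needed, since the threshold is fixed rather than an order statistic.

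Under $H_j:\Rsel(\lambda_j)>\alpha$, $\mathrm{Bin}(m,\Rsel(\lambda_j))$ stochastically dominates $\mathrm{Bin}(m,\alpha)$. The p-value $p_j=P\{\mathrm{Bin}(m,\alpha)\le E_j\}$ is therefore super-uniform: $P(p_j\le\delta\mid N_j=m)\le\delta$ for every $m$. For $m=0$ the test cannot reject at $\delta<1$, since then $p_j=1$. Averaging over $m$ gives $P(p_j\le\delta)\le\delta$ whenever $H_j$ is true.

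Finally, apply the fixed-sequence argument. The certificate fails only if it returns some $\lambda_{\hat j}$ with $\Rsel(\lambda_{\hat j})>\alpha$. Let $j^\ast$ be the first index $j\ge j_0$ in the deterministic order for which $H_j$ is true. A failure requires the walk to have rejected every hypothesis from $j_0$ through $\hat j\ge j^\ast$, and in particular to have rejected $H_{j^\ast}$. Thus
\[
\{\text{failure}\}\subseteq\{p_{j^\ast}\le\delta\},
\]
whose probability is at most $\delta$ by the previous step. Abstention gives $\Rsel(\infty)=0$ and cannot fail. Taking expectation over the training split yields $P(\Rsel(\lamhat)\le\alpha)\ge1-\delta$. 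Nowhere is monotonicity of $\Rsel$ in $\lambda$ used: the inclusion holds for an arbitrary configuration of true and false hypotheses.

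The main obstacle is making sure nothing in the walk depends on the calibration data. The start index must not use calibration quantities, and here $q_j n$ uses only the fixed calibration size, so this holds; I would state that assumption explicitly. The validity statement must also be read conditionally on the training split, or equivalently require the calibration data to be independent of the training split, which holds under the by-problem split.

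A secondary point is ties: with discrete scores the grid thresholds still define fixed acceptance events, so the binomial law still holds. I would note that this is why no continuity assumption is needed, in contrast with Proposition~\ref{prop:valid}.
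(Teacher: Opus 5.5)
Your proposal is correct and follows the paper's proof: the grid, the order and $j_0$ (and $c$) are fixed by the training split and $n$, each $E_j$ is conditionally binomial given $N_j$ so the exact p-value is valid, and a false certificate requires rejecting the first true $H_j$ in the fixed order. You spell out details the paper leaves implicit, namely super-uniformity via stochastic dominance, the $N_j=0$ case, and why fixed thresholds remove the continuity assumption of Proposition~\ref{prop:valid}.
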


\begin{proof}
The grid, the order and $j_0$ depend only on the training split and $n$, so each $H_j$ is a fixed hypothesis, and given $N_j$, $E_j\sim\mathrm{Bin}(N_j,\Rsel(\lambda_j))$, so its p-value is valid. A false certification requires rejecting the first true $H_j$ in the order, which has probability at most $\delta$ \citep{bauer1991multiple}. No monotonicity of $\Rsel$ is used.
\end{proof}

\begin{table}[t]
\centering\scriptsize
\caption{FS-floor (cross-fitted grid, $c$ chosen before calibration by simulation) against the Bonferroni CP rule~\eqref{eq:cert}, $\delta=0.10$. Top: Gaussian score model with known risk at the given AUC and calibration size $n$, 20{,}000 draws; $P_{\mathrm{fail}}=P(R(\hat\lambda)>\alpha)$. Bottom: means over 7 models $\times$ 3 signals, 1{,}000 replicates each; $^{\ast}$median of the per-cell choices of $c$.}
\label{tab:fs}
\setlength{\tabcolsep}{2.8pt}
\begin{tabular}{rrl cc cccc}
\toprule
 & & & \multicolumn{2}{c}{CP, Eq.~\eqref{eq:cert}} & \multicolumn{4}{c}{FS-floor} \\
\cmidrule(lr){4-5}\cmidrule(lr){6-9}
AUC & $n$ & $\alpha$ & Cov. & $P_{\mathrm{fire}}$ & $c$ & Cov. & $P_{\mathrm{fire}}$ & $P_{\mathrm{fail}}$ \\
\midrule
0.82 & 154 & $0.75\pi_0$ & 0.321 & 0.55 & 4 & 0.635 & 0.85 & 0.053 \\
0.82 & 154 & $0.5\pi_0$ & 0.051 & 0.09 & 2 & 0.272 & 0.49 & 0.039 \\
0.71 & 154 & $0.75\pi_0$ & 0.052 & 0.11 & 3 & 0.231 & 0.40 & 0.040 \\
0.71 & 154 & $0.5\pi_0$ & 0.003 & 0.01 & 1.5 & 0.045 & 0.12 & 0.022 \\
0.76 & 78 & $0.75\pi_0$ & 0.061 & 0.12 & 3 & 0.262 & 0.40 & 0.043 \\
0.66 & 153 & $0.75\pi_0$ & 0.010 & 0.02 & 2 & 0.083 & 0.14 & 0.023 \\
\midrule
\multicolumn{2}{l}{real, 21 pairs} & $\pi_0$ & 0.211 & 0.33 & 4$^{\ast}$ & 0.390 & 0.50 & -- \\
\multicolumn{2}{l}{real, 21 pairs} & $0.75\pi_0$ & 0.051 & 0.10 & 3$^{\ast}$ & 0.163 & 0.28 & -- \\
\multicolumn{2}{l}{real, 21 pairs} & $0.5\pi_0$ & 0.004 & 0.01 & 1.5$^{\ast}$ & 0.033 & 0.07 & -- \\
\bottomrule
\end{tabular}
\end{table}

\paragraph{Choosing $c$ and the grid.} The constant $c$ sets how far above the zero-error floor the walk begins. Starting at the floor itself ($c=1$) makes the first test intolerant of a single error, and on real data at $\alpha=\pizero$ that covers less than rule \eqref{eq:cert} ($0.13$ against $0.21$). We therefore choose $c\in\{1,1.5,2,3,4\}$ per cell \emph{before calibration}: we simulate the Gaussian score model of \S\ref{sec:synthetic} at the training split's out-of-fold AUC, its base error rate and $n$, and take the $c$ with the largest simulated coverage. The choice uses no calibration data, so Proposition~\ref{prop:fs} is untouched, and the Gaussian assumption affects only power. The grid is cross-fitted in the same spirit: its quantiles come from out-of-fold scores of the training problems, since in-sample scores are optimistic for the scorer. Appendix Table~\ref{tab:fsfull} and Figure~\ref{fig:attain} report every fixed $c$, the in-sample grid and the per-model curves.

\paragraph{Results.} Table~\ref{tab:fs} compares FS-floor with rule \eqref{eq:cert}. In simulation the failure probability is at most $0.055$ against $\delta=0.10$ for every $c$ and setting. At the chosen $c$, coverage is $2.0$--$8.3\times$ that of rule \eqref{eq:cert} in the five settings where the latter fires in more than $1\%$ of draws, and $15\times$ in the one where it fires in $1\%$; and $\pfire$ rises from $0.11$ to $0.40$ in the AUC-$0.71$ regime; the probability that an issued certificate is wrong is $0.06$--$0.11$ in the four settings where $\pfire$ is at least $0.4$ and grows as $\pfire$ falls, as Proposition~\ref{prop:abstain} predicts. FS-floor does not control this conditional failure, which validity alone does not guarantee; by raising $\pfire$ it tightens the bound $\delta/\pfire$. On the real cells FS-floor covers more than rule \eqref{eq:cert} in all 21 pairs at all three targets. At $\alpha=0.75\pizero$ mean coverage rises from $0.051$ to $0.163$ and $\pfire$ from $0.10$ to $0.28$, the number of pairs with $\pfire\le\delta$ falls from 15 to 4, and 11 pairs reach $10\%$ coverage against 2; at $\alpha=\pizero$ coverage rises from $0.211$ to $0.390$ and all 21 pairs reach $10\%$ against 14. Conditional on firing, $\hat R/\alpha$ has median $0.80$ at $\alpha=\pizero$ and $0.94$ at $0.75\pizero$, against $0.76$ and $0.84$ for rule \eqref{eq:cert}: a procedure that abstains less certifies sets closer to the target, as it should. Two cheap baselines locate the gain (Appendix Table~\ref{tab:baselines}): shrinking the Bonferroni grid of rule \eqref{eq:cert} to $G=5$ raises mean coverage at $0.75\pizero$ only from $0.05$ to $0.07$, and the floor start alone, a single fixed threshold tested at level $\delta$, reaches $0.10$, and the walk $0.16$. The simulation that chooses $c$ also predicts $\pfire$ before any calibration label is seen; the forecast is rough (mean absolute error $0.10$ against a mean $\pfire$ of $0.28$ at $0.75\pizero$) but ranks the 21 pairs well (Spearman $0.70$), so an operator can tell before labelling whether a certificate is likely to fire. The floor remains binding: no non-randomized procedure on this statistic can certify with fewer than $a_0^\star(\alpha,\delta)$ accepted points, and mean coverage at $0.75\pizero$ stays at $0.16$. Figure~\ref{fig:budget} shows what larger budgets buy in the Gaussian model: at AUC $0.76$ the certificate fires in most draws from about 200 problems and covers half the traces from about 500; at AUC $0.66$ even 3{,}000 problems certify fewer than half. FS-floor's advantage is concentrated below a few hundred problems.

\begin{figure}[t]
\centering
\includegraphics[width=\columnwidth]{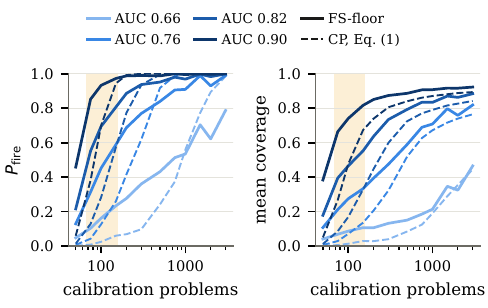}
\caption{Calibration budget curves in the Gaussian score model ($\pizero=0.16$, $\alpha=0.75\pizero$, $\delta=0.10$, 2{,}000 draws per point): firing probability and mean coverage of FS-floor ($c=2$, solid) and rule \eqref{eq:cert} (dashed) against the number of calibration problems, for four values of the verifier's AUC. Shaded: the budgets in this study.}
\label{fig:budget}
\end{figure}

\section{Shift}
\label{sec:shift}

We freeze the GSM8K-trained scorer and the GSM8K-calibrated threshold ($\alpha=\pizero$ of GSM8K) and apply them to the same model's traces on ARC-Challenge \citep{clark2018arc}, StrategyQA \citep{geva2021strategyqa}, CommonsenseQA \citep{talmor2019commonsenseqa} and three BIG-Bench-Hard logic tasks \citep{suzgun2023bbh}: 72 (model, benchmark, signal) cells. Exchangeability fails by construction \citep{tibshirani2019covariate,barber2023beyond}.

Every one of the 72 cells fires in some replicates. In 40 of them $\hat R$ exceeds the target, by up to $0.28$, and the split is clean: the target is exceeded in 40 of the 45 cells where the new task is harder than the target and in none of the 27 where it is easier. The error among accepted traces is a median $0.94$ of the new task's base error, so off-distribution the accepted set is barely better than accepting everything; Appendix Figure~\ref{fig:shift} and Table~\ref{tab:shift} show every cell. Across cells $\hat R$ tracks the new task's base error (Spearman $0.88$), and the remaining benefit shrinks as the score's AUC drops (Spearman $0.42$; all $p<10^{-3}$, nominal since cells cluster within models).

\section{Selection Pressure}
\label{sec:pressure}

\begin{figure*}[t]
\centering
\includegraphics[width=\textwidth]{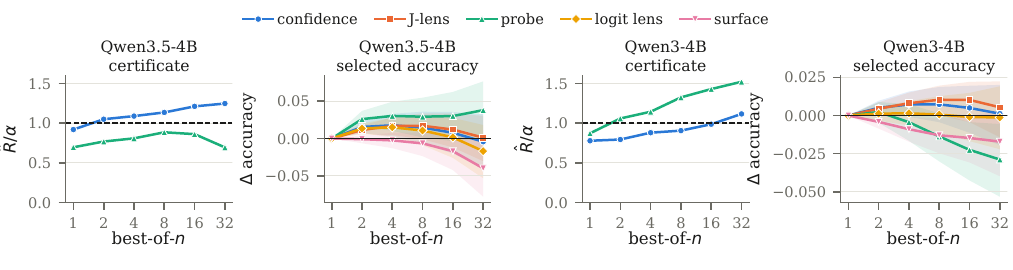}
\caption{Best-of-$n$ selection against each verifier signal. \emph{Certificate panels}: error among accepted BoN-selected test traces, pooled over firing replicates, relative to $\alpha=\pizero$; thresholds calibrated on unpressured traces. Signals that fire in fewer than 2\% of 1{,}000 replicates are omitted. \emph{Accuracy panels}: change in accuracy of the selected trace relative to $n=1$, with 95\% bootstrap bands over problems.}
\label{fig:pressure}
\end{figure*}

A deployed verifier is an optimization target: candidates are sampled and the top-scoring one is kept. Marginal conformal guarantees can fail under data-driven selection \citep{jin2025focal}, and proxy scores are overoptimized by best-of-$n$ (BoN) \citep{gao2023scaling}. We measure the dose-response for verifier certificates.

\paragraph{Design.} For Qwen3.5-4B and Qwen3-4B we sample up to 64 traces per problem on GSM8K problems unused elsewhere in this paper, so the frozen scorers have never seen them; 149 and 243 problems retain at least 32 usable traces, and analyses use 32 per problem. Pressure runs over $n\in\{1,2,4,\dots,32\}$, up to $\log n-(n-1)/n=2.5$ nats from the base policy \citep{beirami2024bon}. In the \emph{naive arm} the threshold is calibrated on unpressured traces from half of the problems and applied to BoN-selected traces of the other half. The target is the model's GSM8K training-partition base error from \S\ref{sec:attain} ($0.16$ and $0.10$), fixed before any pressure problem was drawn; the sample base errors of the pressure problems are $0.20$ and $0.11$, so both targets are non-vacuous there. In the \emph{corrected arm} calibration traces are selected by the same BoN rule, restoring exchangeability conditional on selection \citep{jin2025focal}. We use 1{,}000 resplits.

\paragraph{Error among accepted rises with $n$.} Calibration sets here are 74 and 122 problems, half of the pressure problems, and the targets are non-vacuous, so $\pfire$ is low: $0.14$ for the probe and $0.03$ for confidence on Qwen3.5-4B, $0.09$ and $0.05$ for confidence and the probe on Qwen3-4B, and at most $0.02$ otherwise. Conditional on firing (Figure~\ref{fig:pressure}; Appendix Table~\ref{tab:bon} and Figure~\ref{fig:corrected}), $\hat R/\alpha$ rises from $0.87$ to $1.52$ for the probe and from $0.78$ to $1.12$ for confidence on Qwen3-4B, and from $0.92$ to $1.25$ for confidence on Qwen3.5-4B. The exception is the probe on Qwen3.5-4B ($0.70\to0.69$), the one pair for which BoN improves the policy monotonically. The naive arm breaks exchangeability between calibration and deployment, so Proposition~\ref{prop:valid} promises nothing for it; what happens empirically is that the fraction of replicates whose test error among accepted exceeds $\alpha$ stays below $\delta$ in every cell (at most $0.056$), because most replicates accept nothing.

\paragraph{Overoptimization.} The accuracy of the BoN-selected trace (Figure~\ref{fig:pressure}, accuracy panels) peaks at small $n$ and then falls for several signals. On Qwen3-4B the probe loses $3.3$ points from its peak at $n=2$ and confidence $1.8$ from $n=4$; on Qwen3.5-4B the surface signal loses $5.7$ and the logit lens $2.6$, while the probe gains $7.7$ monotonically. No signal is uniformly safe; the bands are wide and we make no confirmatory claim about the ranking. Regardless of ranking, a bound on error among accepted traces, calibrated before optimization, says nothing about whether selecting on the score still improves the policy.

\paragraph{The correction costs coverage.} Selection-conditional calibration restores the exchangeability that Proposition~\ref{prop:valid} needs, so the corrected arm is valid at every $n$, but at $n=32$ its mean coverage falls to $0.005$ or below in three of the four principal cells. Only the strongest scorer benefits (probe on Qwen3.5-4B, $0.09\to0.22$), because BoN has lowered that policy's error to $0.12$, below the $0.16$ target.

\section{Related Work}

\paragraph{Distribution-free selection and risk control.} Selective classification with binomial certificates goes back to \citet{elyaniv2010foundations} and \citet{geifman2017selective}; RCPS \citep{bates2021rcps}, Learn-then-Test \citep{angelopoulos2021ltt} and conformal risk control \citep{angelopoulos2024crc} generalize the risk and the search. Conformal p-values with BH give FDR-controlled selection \citep{bates2023outliers,jin2023selection}, applied to LLM outputs by \citet{gui2024conformal,mohri2024factuality,quach2024conformal}; these works show power at moderate to large calibration sizes, and we characterize the small-budget regime.

\paragraph{Verifier signals.} Confidence \citep{kadavath2022know}, hidden-state probes \citep{azaria2023internal,burns2023discovering,zhang2025reasoning}, learned verifiers \citep{cobbe2021gsm8k,lightman2024verify} and lens readouts \citep{nostalgebraist2020logit,belrose2023tuned,gurnee2026workspace} are evaluated by ranking metrics; monitoring written CoT, which can be unfaithful \citep{turpin2023unfaithful}, is an active proposal \citep{korbak2025monitorability,baker2025monitoring}. We price readability in deployment terms.

\paragraph{Overoptimization.} \citet{gao2023scaling} characterize BoN and RL overoptimization of reward models; ensembles mitigate but do not remove it \citep{coste2024ensembles,eisenstein2023herding}. We study correctness verifiers with exact labels and ask what happens to a certificate; \citet{jin2025focal} prove that selection breaks marginal conformal coverage and give the correction whose coverage cost we measure.

\section{Limitations and Discussion}

All models are open-weight decoders of at most 8B parameters; certification is on GSM8K with one prompt format and temperature $0.8$. Scorers are fit on the 30\% training partition, which for the most accurate model holds only 4--12 incorrect traces, so the readable signals' AUCs, and the coverage gap to the probe, may narrow with more training errors. The best-of-$n$ results cover two models. Replicate-based uncertainty reflects resplitting on a fixed problem sample. The probe reconstruction is linear, so a non-linear readable reconstruction could narrow the gap.

We draw four practical conclusions. Report $\pfire$ next to coverage, and treat $(\alpha,\delta)$-validity as uninformative about issued certificates when $\pfire\le\delta$. Fix thresholds before calibration and start testing at the floor. Budget calibration labels from Lemma~\ref{lem:floor} and the verifier's AUC before collecting them; accepted traces are only a fraction of the labelled ones. Do not read a valid certificate as evidence that the verifier is safe to optimize against or to move to a new task: in our data the error among accepted traces followed the task and the optimizer, not the certificate.

\subsection*{AI use statement}

We used a large language model coding assistant throughout this work. It helped write and debug the analysis code, draft and edit the text and figures, and refine the experimental design, hypotheses, proofs of the statements in Section~\ref{sec:theory}, data processing and interpretation of results. The research questions, design decisions and claims are the authors'; we checked every script and every line of the paper, all tables and figures are regenerated by script from the released result files, and we take responsibility for the final content.

\bibliography{refs}

\clearpage
\appendix
\onecolumn
\begin{center}{\Large\bfseries Appendix}\end{center}
\vspace{0.5em}

\section{Proofs}
\label{app:proofs}

\paragraph{Proposition~\ref{prop:valid}.} Let $(Y_i,S_i)$, $i\le n$, be i.i.d.\ with $S_i$ continuous, and fix $k$. Conditional on $S_{(k)}=t$, the $k-1$ observations with $S_i>t$ are i.i.d.\ from the law of $(Y,S)$ given $S>t$, by the standard conditional structure of order statistics. Hence $e_{k-1}\mid\{S_{(k)}=t\}\sim\mathrm{Bin}(k-1,\Rsel(t))$ with $\Rsel(t)=P(Y=0\mid S>t)$, which for continuous scores equals the selective risk of the acceptance rule $\{s\ge t\}$. The Clopper--Pearson limit satisfies $P\{\Ucp(X,m;\delta')<p\}\le\delta'$ for $X\sim\mathrm{Bin}(m,p)$, so $P\{\Ucp(e_{k-1},k-1;\delta')<\Rsel(S_{(k)})\mid S_{(k)}=t\}\le\delta'$ for every $t$, and therefore unconditionally. The procedure errs only if, for the selected $\hat k$, $\Ucp(e_{\hat k-1},\hat k-1;\delta/G)\le\alpha<\Rsel(S_{(\hat k)})$, an event contained in the union over $k\in\mathcal K$ of the events just bounded, which has probability at most $G\cdot\delta/G=\delta$. When nothing is selected $\Rsel(\infty)=0\le\alpha$.

\paragraph{Boundary-included variant.} If the rule also counts the boundary trace, testing $(e_k,k)$ and accepting $s\ge S_{(k)}$, then $e_k=e_{k-1}+B$ where, given $S_{(k)}=t$, $B\sim\mathrm{Bern}(r(t))$ with $r(t)=P(Y=0\mid S=t)$, independent of $e_{k-1}$. If $r(t)\ge\Rsel(t)$ then $e_k$ is stochastically larger than $\mathrm{Bin}(k,\Rsel(t))$, and since $\Ucp(\cdot,k;\delta')$ is increasing in its first argument the same bound holds. The condition is implied by a monotone likelihood ratio between the score distributions of incorrect and correct traces. Without it the variant can be anti-conservative by at most the contribution of one observation. Table~\ref{tab:bh} compares the two rules on our data. \qed

\paragraph{Proposition~\ref{prop:lattice}.} Write $N=n_0+1$. Every conformal p-value lies in $\{1/N,\dots,1\}$. BH rejects the $R$ smallest p-values, $R=\max\{r:p_{(r)}\le\alpha r/m\}$, with $R=0$ if the set is empty. If $R\ge1$ then $1/N\le p_{(R)}\le\alpha R/m$, so $R\ge m/(\alpha N)$, which gives the first claim. For the equivalence: if $m\hat F_m(k/N)\ge km/(\alpha N)$ for some $k$, put $r=m\hat F_m(k/N)$; then $p_{(r)}\le k/N\le\alpha r/m$, so $R\ge r\ge1$. Conversely, if $p_{(r)}\le\alpha r/m$ for some $r\ge1$, let $k=Np_{(r)}$; then $m\hat F_m(k/N)\ge r\ge p_{(r)}m/\alpha=km/(\alpha N)$. The last claim is the case $k=1$, since $\hat F_m(1/N)$ is the fraction of test traces that outscore every calibration null. \qed

\section{Signals, Data and Compute}
\label{app:signals}

\paragraph{Generation.} Plain-text two-shot prompts with exemplars from the GSM8K training split; temperature $0.8$, top-$p$ $0.95$, at most 640 new tokens; 2--4 samples per problem. A trace is usable if it contains a parseable final answer. For multiple-choice and yes/no benchmarks answers are parsed with a style-anchored rule, because base models frequently continue writing past their answer. Models: \texttt{Qwen3.5-0.8B} and \texttt{Qwen3.5-4B} \citep{qwen2026qwen35}, \texttt{Qwen3-1.7B}, \texttt{Qwen3-4B} and \texttt{Qwen3-8B} \citep{yang2025qwen3}, \texttt{Qwen2.5-7B-Instruct} \citep{qwen2024qwen25} and \texttt{Olmo-3-1025-7B} \citep{olmo2025olmo3}, all under their published open-weight licences. The Jacobian lens and its pre-fitted per-model lenses are the public release accompanying \citet{gurnee2026workspace}.

\paragraph{Features.} Residual-stream states are read at the last token of each reasoning step, at every second layer, in a teacher-forced pass. \emph{J-lens} and \emph{logit lens}: 51 features each, namely step-to-step cosine velocity, curvature, path length, tortuosity and terminal alignment of the decoded top-20 token distributions, plus entropy, churn and the rate of ``silent'' concepts (decoded but never written). \emph{Surface}: 9 geometric features of \texttt{all-mpnet-base-v2} embeddings of the written step text. \emph{Confidence}: 5 statistics of the token log-probabilities. \emph{Length}: 3 trace-length features, included only in the 119-feature readable set of Appendix~\ref{app:recon}. \emph{Probe}: residual vectors at layer $\mathrm{round}(L/2)$, mean over step positions concatenated with the answer position ($2d_{\mathrm{model}}$ dimensions).

\paragraph{Scorers.} Standardization followed by $\ell_2$-regularized logistic regression ($C=1.0$ for the four readable signals, $C=0.01$ for the probe), fit on all traces of the 30\% training problems, separately for 5 seeds of the train split, and frozen. The target levels $\alpha=\rho\,\pizero$ use $\pizero$ estimated on the training partition of the same seed.

\paragraph{Compute.} Generation and feature extraction ran on single A100 GPUs from a commercial cloud provider, for a total cost of roughly US\$50. All certificate analyses run on a laptop CPU from the cached per-trace scores. Code, data and a script that regenerates every table and figure are in the anonymized repository \url{https://github.com/ArjunBalaji79/certified-by-abstention-code} (about 70 minutes end to end on a laptop CPU).

\paragraph{Best-of-$n$ details.} The certificate arms draw best-of-$n$ subsets by Monte Carlo from 32 traces per problem; the accuracy curves use the closed-form rank weights. Exact duplicate traces (6\% and 15\% of samples for the two models) make score ties common; ties are broken by a fixed random key, and tied traces share a label.

\paragraph{Statistics.} For each configuration we report mean coverage over all replicates, the firing probability $\pfire$, and $\hat R$, the error rate among accepted test traces pooled over firing replicates. We do not average per-replicate error with empty certificates counted as zero, since that average is dominated by abstentions. The comparison of J-lens and confidence coverage, the probe residual analysis and the best-of-$n$ sweep were specified before the corresponding data were collected; statistics conditional on firing, the shift analysis by task difficulty and FS-floor were developed afterwards.

\section{Probe Reconstruction}
\label{app:recon}

The probe's advantage could be information that the readable features cannot express, or better estimation of the same information. We reconstruct the probe score from the 119 readable features by a ridge regression fit on out-of-fold probe scores of the training problems, choose every regularization constant by grouped cross-validation on those problems, and evaluate on held-out problems (below). The reconstruction has mean AUC $0.63$ across the seven models, no better than the readable logistic regression ($0.65$) and well below the probe ($0.75$); the residual, probe minus reconstruction, keeps mean AUC $0.66$ and lies between $0.59$ and $0.72$ on every model; and as a certificate signal the reconstruction buys the readable signals' coverage, not the probe's. The probe's edge is therefore not linearly expressible in the readable features, and the coverage gap reflects information rather than tuning, at least at these training sizes: cross-validating the readable scorers' regularization moves their AUC by $0.02$. (Regressing on in-sample probe scores instead gives residual AUC $0.54$; the next paragraph explains why.)

For each of 5 splits by problem (30\% training), we fit the probe ($2d_{\mathrm{model}}$-dimensional mid-depth residual vectors at the step and answer positions, $2{,}048$--$8{,}192$ across models, $C\in\{10^{-3},10^{-2},10^{-1}\}$) and the readable logistic regression (119 features, $C\in\{10^{-2},\dots,10\}$) on the training problems with $C$ chosen by 5-fold grouped cross-validation, obtain out-of-fold probe scores on the training problems, and fit a ridge regression (standardized features, $\lambda\in\{0.1,\dots,100\}$ by the same cross-validation) from the readable features to those out-of-fold scores. All quantities in Table~\ref{tab:recon} are evaluated on the held-out problems. An in-sample variant of this test, which regresses on the probe's \emph{in-sample} scores on the training problems, is misleading: a $2d_{\mathrm{model}}$-dimensional probe separates its training data almost perfectly, so that regression learns a readable predictor of the label rather than of the probe, and its residual (AUC $0.54$) removes the label-correlated part of the probe by construction. The certificate columns rerun rule~\eqref{eq:cert} on the held-out problems (calibration/test split $4{:}3$, 200 resplits per training split) with each score as the signal.

\begin{table}[H]
\centering\small
\caption{Cross-fitted probe reconstruction. Held-out AUC (mean over 5 train/held-out splits by problem) of the probe, the readable logistic regression, a ridge reconstruction of the probe score from the 119 readable features (fit on out-of-fold probe scores of the training problems), and the residual probe $-$ reconstruction; all regularization constants chosen by grouped cross-validation on the training problems. ``Resid.\ (in-sample)'': the residual when the ridge is instead fit on in-sample probe scores, which are nearly separated. Right: mean coverage of the CP certificate at $\alpha=\pi_0$ on held-out problems with each score as the signal.}
\label{tab:recon}
\setlength{\tabcolsep}{4pt}
\begin{tabular}{l ccccc ccc}
\toprule
 & \multicolumn{5}{c}{Held-out AUC} & \multicolumn{3}{c}{Coverage, $\alpha=\pi_0$} \\
\cmidrule(lr){2-6}\cmidrule(lr){7-9}
Model & Probe & Readable & Recon. & Resid. & Resid.\ (in-sample) & Probe & Readable & Recon. \\
\midrule
Qwen3.5-0.8B & 0.812 & 0.700 & 0.738 & 0.721 & 0.621 & 0.714 & 0.485 & 0.590 \\
Qwen3.5-4B & 0.846 & 0.719 & 0.743 & 0.694 & 0.614 & 0.774 & 0.516 & 0.566 \\
OLMo-3-7B & 0.758 & 0.712 & 0.707 & 0.649 & 0.513 & 0.346 & 0.205 & 0.189 \\
Qwen3-1.7B & 0.767 & 0.618 & 0.567 & 0.697 & 0.539 & 0.217 & 0.067 & 0.016 \\
Qwen2.5-7B-It & 0.650 & 0.680 & 0.554 & 0.595 & 0.440 & 0.182 & 0.209 & 0.143 \\
Qwen3-4B & 0.739 & 0.658 & 0.597 & 0.639 & 0.486 & 0.296 & 0.201 & 0.217 \\
Qwen3-8B & 0.701 & 0.445 & 0.525 & 0.647 & 0.538 & 0.013 & 0.002 & 0.004 \\
\midrule
Mean & 0.753 & 0.648 & 0.633 & 0.663 & 0.536 &  &  &  \\
\bottomrule
\end{tabular}
\end{table}

\clearpage
\section{Additional Figures}
\label{app:figs}

\begin{figure}[H]
\centering
\includegraphics[width=\textwidth]{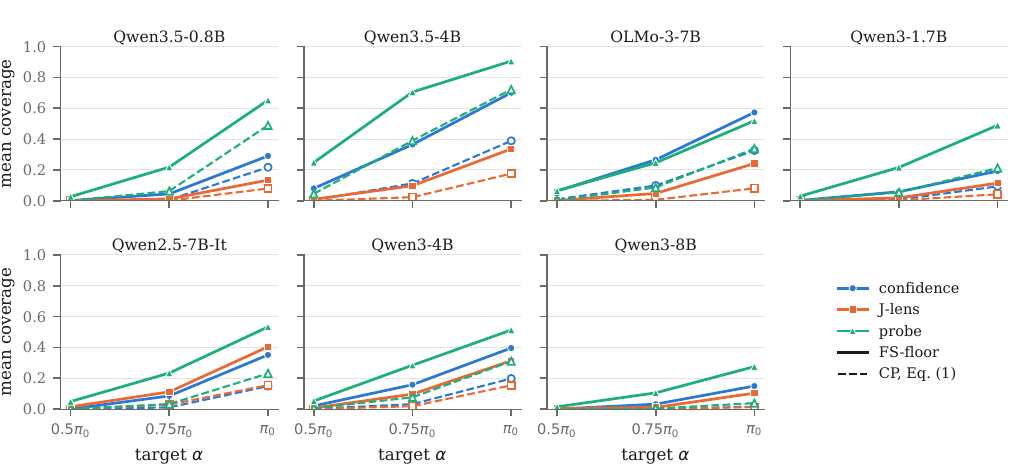}
\caption{Mean coverage at the three relative targets for each model and principal signal on GSM8K: rule~\eqref{eq:cert} (dashed, open markers) and FS-floor with the cross-fitted grid and simulation-chosen $c$ (solid). FS-floor covers more in all 21 pairs at all three targets, and the gap is largest where the certificate fires at all.}
\label{fig:attain}
\end{figure}

\begin{figure}[H]
\centering
\includegraphics[width=0.5\textwidth]{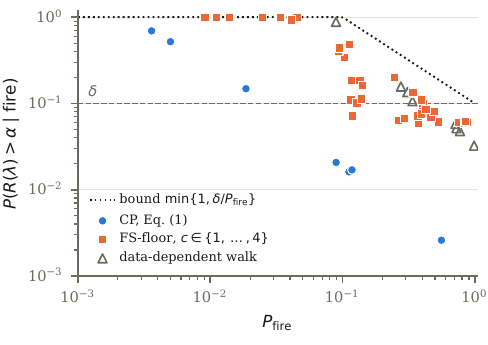}
\caption{Validity by abstention in the Gaussian score model (the settings of Tables~\ref{tab:fs} and~\ref{tab:synthetic}). Each point is one procedure in one setting: the probability that an issued certificate is wrong, against the firing probability. Proposition~\ref{prop:abstain} bounds the former by $\min\{1,\delta/P_{\mathrm{fire}}\}$ (dotted). Rule~\eqref{eq:cert} sits far below $\delta$ where it fires often and far above it where it rarely fires; FS-floor sits near $\delta$; the data-dependent walk of \S\ref{sec:tighter} has no bound.}
\label{fig:abstention}
\end{figure}

\begin{figure}[H]
\centering
\includegraphics[width=0.55\textwidth]{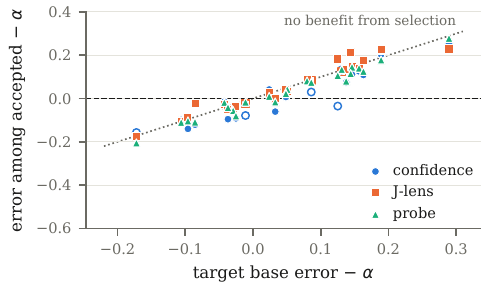}
\caption{GSM8K-calibrated certificates on other benchmarks (Section~\ref{sec:shift}; every cell in Table~\ref{tab:shift}): excess of the error among accepted traces over the target, against how much harder the new task is. Filled: mean coverage above $2\%$. On the dotted diagonal the accepted set is no better than accepting everything.}
\label{fig:shift}
\end{figure}

\begin{figure}[H]
\centering
\includegraphics[width=0.68\textwidth]{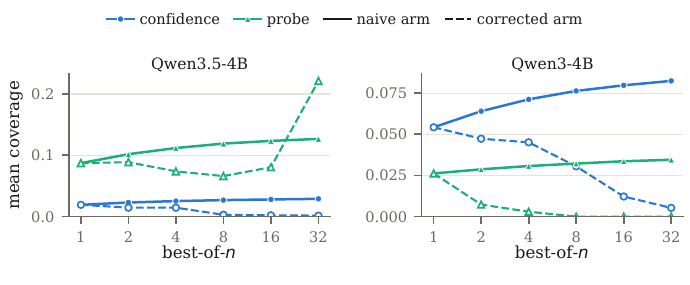}
\caption{Best-of-$n$: mean coverage of the naive arm (solid; threshold calibrated on unpressured traces) and of the selection-conditional arm (dashed; calibration traces selected by the same best-of-$n$ rule), for the signals that fire in at least 2\% of replicates. The correction restores exchangeability, but its coverage collapses with $n$ except for the probe on Qwen3.5-4B, where best-of-$n$ lowers the policy's error below the target.}
\label{fig:corrected}
\end{figure}

\clearpage
\section{Additional Tables}
\label{app:exact}\label{app:auc}

\begin{table}[t]
\centering\small
\caption{Bounds in the same selective construction (same grid, Bonferroni split, data and replicates) at $\alpha=\pi_0$; 7 models $\times$ 3 signals. $^\dagger$No validity proof (see text).}
\label{tab:bounds}
\setlength{\tabcolsep}{4pt}
\begin{tabular}{lccc}
\toprule
Bound & Mean cov. & Cov.$\ge$10\% & Beats CP \\
\midrule
Clopper--Pearson & 0.211 & 14/21 & -- \\
Hoeffding--Bentkus & 0.156 & 12/21 & 0/21 \\
Betting mixture & 0.087 & 6/21 & 0/21 \\
\midrule
Data-dep.\ walk$^\dagger$ & 0.435 & 21/21 & -- \\
\bottomrule
\end{tabular}
\end{table}

\begin{table}[H]
\centering\small
\caption{Mean coverage of the CP certificate (in parentheses: $P_{\mathrm{fire}}$) for every model, target and signal; GSM8K, $\delta=0.10$, 1{,}000 replicates. ``Non-vac.'': target below the base error rate of the non-training problems.}
\label{tab:full}
\begin{tabular}{llrc ccccc}
\toprule
Model & Target & $\alpha$ & Non-vac. & Confidence & J-lens & Logit lens & Surface & Probe \\
\midrule
Qwen3.5-0.8B & $0.5\pi_0$ & 0.322 & yes & 0.000 (0.00) & 0.000 (0.00) & 0.000 (0.00) & 0.000 (0.00) & 0.002 (0.02) \\
 & $0.75\pi_0$ & 0.483 & yes & 0.007 (0.05) & 0.001 (0.01) & 0.000 (0.00) & 0.000 (0.00) & 0.063 (0.38) \\
 & $\pi_0$ & 0.644 & yes & 0.218 (0.58) & 0.080 (0.25) & 0.051 (0.13) & 0.000 (0.00) & 0.489 (0.96) \\
 & 0.05 & 0.050 & yes & 0.000 (0.00) & 0.000 (0.00) & 0.000 (0.00) & 0.000 (0.00) & 0.000 (0.00) \\
 & 0.10 & 0.100 & yes & 0.000 (0.00) & 0.000 (0.00) & 0.000 (0.00) & 0.000 (0.00) & 0.000 (0.00) \\
 & 0.20 & 0.200 & yes & 0.000 (0.00) & 0.000 (0.00) & 0.000 (0.00) & 0.000 (0.00) & 0.000 (0.00) \\
\addlinespace
Qwen3.5-4B & $0.5\pi_0$ & 0.081 & yes & 0.005 (0.01) & 0.000 (0.00) & 0.000 (0.00) & 0.000 (0.00) & 0.046 (0.08) \\
 & $0.75\pi_0$ & 0.122 & yes & 0.113 (0.22) & 0.024 (0.05) & 0.013 (0.03) & 0.001 (0.00) & 0.390 (0.62) \\
 & $\pi_0$ & 0.162 & no & 0.389 (0.57) & 0.176 (0.24) & 0.144 (0.21) & 0.054 (0.07) & 0.721 (0.93) \\
 & 0.05 & 0.050 & yes & 0.000 (0.00) & 0.000 (0.00) & 0.000 (0.00) & 0.000 (0.00) & 0.000 (0.00) \\
 & 0.10 & 0.100 & yes & 0.014 (0.03) & 0.002 (0.01) & 0.001 (0.00) & 0.000 (0.00) & 0.152 (0.28) \\
 & 0.20 & 0.200 & no & 0.727 (0.92) & 0.474 (0.61) & 0.427 (0.56) & 0.159 (0.19) & 0.898 (1.00) \\
\addlinespace
OLMo-3-7B & $0.5\pi_0$ & 0.172 & yes & 0.010 (0.02) & 0.000 (0.00) & 0.000 (0.00) & 0.000 (0.00) & 0.003 (0.01) \\
 & $0.75\pi_0$ & 0.258 & yes & 0.099 (0.19) & 0.007 (0.02) & 0.009 (0.02) & 0.003 (0.01) & 0.087 (0.21) \\
 & $\pi_0$ & 0.344 & no & 0.326 (0.53) & 0.081 (0.16) & 0.098 (0.15) & 0.086 (0.13) & 0.338 (0.54) \\
 & 0.05 & 0.050 & yes & 0.000 (0.00) & 0.000 (0.00) & 0.000 (0.00) & 0.000 (0.00) & 0.000 (0.00) \\
 & 0.10 & 0.100 & yes & 0.000 (0.00) & 0.000 (0.00) & 0.000 (0.00) & 0.000 (0.00) & 0.000 (0.00) \\
 & 0.20 & 0.200 & yes & 0.014 (0.03) & 0.000 (0.00) & 0.000 (0.00) & 0.000 (0.00) & 0.006 (0.02) \\
\addlinespace
Qwen3-1.7B & $0.5\pi_0$ & 0.107 & yes & 0.000 (0.00) & 0.000 (0.00) & 0.000 (0.00) & 0.000 (0.00) & 0.002 (0.00) \\
 & $0.75\pi_0$ & 0.160 & yes & 0.008 (0.01) & 0.003 (0.01) & 0.005 (0.01) & 0.004 (0.01) & 0.055 (0.09) \\
 & $\pi_0$ & 0.213 & yes & 0.094 (0.11) & 0.042 (0.05) & 0.082 (0.10) & 0.051 (0.07) & 0.212 (0.29) \\
 & 0.05 & 0.050 & yes & 0.000 (0.00) & 0.000 (0.00) & 0.000 (0.00) & 0.000 (0.00) & 0.000 (0.00) \\
 & 0.10 & 0.100 & yes & 0.000 (0.00) & 0.000 (0.00) & 0.000 (0.00) & 0.000 (0.00) & 0.000 (0.00) \\
 & 0.20 & 0.200 & yes & 0.023 (0.04) & 0.005 (0.01) & 0.006 (0.01) & 0.001 (0.00) & 0.131 (0.23) \\
\addlinespace
Qwen2.5-7B-It & $0.5\pi_0$ & 0.110 & yes & 0.000 (0.00) & 0.001 (0.00) & 0.000 (0.00) & 0.000 (0.00) & 0.001 (0.00) \\
 & $0.75\pi_0$ & 0.166 & yes & 0.012 (0.02) & 0.030 (0.06) & 0.016 (0.03) & 0.014 (0.02) & 0.033 (0.06) \\
 & $\pi_0$ & 0.221 & no & 0.149 (0.19) & 0.156 (0.24) & 0.118 (0.19) & 0.117 (0.14) & 0.232 (0.31) \\
 & 0.05 & 0.050 & yes & 0.000 (0.00) & 0.000 (0.00) & 0.000 (0.00) & 0.000 (0.00) & 0.000 (0.00) \\
 & 0.10 & 0.100 & yes & 0.000 (0.00) & 0.000 (0.00) & 0.000 (0.00) & 0.000 (0.00) & 0.000 (0.00) \\
 & 0.20 & 0.200 & no & 0.029 (0.04) & 0.064 (0.12) & 0.034 (0.06) & 0.023 (0.04) & 0.089 (0.15) \\
\addlinespace
Qwen3-4B & $0.5\pi_0$ & 0.051 & yes & 0.003 (0.00) & 0.001 (0.00) & 0.000 (0.00) & 0.000 (0.00) & 0.006 (0.01) \\
 & $0.75\pi_0$ & 0.076 & yes & 0.033 (0.05) & 0.019 (0.03) & 0.005 (0.01) & 0.011 (0.02) & 0.077 (0.11) \\
 & $\pi_0$ & 0.101 & no & 0.199 (0.25) & 0.155 (0.19) & 0.084 (0.10) & 0.109 (0.14) & 0.311 (0.39) \\
 & 0.05 & 0.050 & yes & 0.001 (0.00) & 0.000 (0.00) & 0.000 (0.00) & 0.000 (0.00) & 0.001 (0.00) \\
 & 0.10 & 0.100 & no & 0.064 (0.10) & 0.041 (0.06) & 0.008 (0.01) & 0.021 (0.03) & 0.123 (0.19) \\
 & 0.20 & 0.200 & no & 0.966 (0.99) & 0.942 (0.98) & 0.902 (0.92) & 0.928 (0.96) & 0.974 (0.99) \\
\addlinespace
Qwen3-8B & $0.5\pi_0$ & 0.047 & yes & 0.000 (0.00) & 0.000 (0.00) & 0.000 (0.00) & 0.000 (0.00) & 0.000 (0.00) \\
 & $0.75\pi_0$ & 0.071 & yes & 0.002 (0.00) & 0.001 (0.00) & 0.001 (0.00) & 0.001 (0.00) & 0.004 (0.01) \\
 & $\pi_0$ & 0.095 & no & 0.016 (0.02) & 0.014 (0.02) & 0.009 (0.01) & 0.004 (0.00) & 0.039 (0.05) \\
 & 0.05 & 0.050 & yes & 0.000 (0.00) & 0.000 (0.00) & 0.000 (0.00) & 0.000 (0.00) & 0.000 (0.00) \\
 & 0.10 & 0.100 & no & 0.007 (0.01) & 0.007 (0.01) & 0.004 (0.00) & 0.004 (0.00) & 0.025 (0.03) \\
 & 0.20 & 0.200 & no & 0.577 (0.65) & 0.488 (0.52) & 0.399 (0.43) & 0.385 (0.41) & 0.731 (0.80) \\
\bottomrule
\end{tabular}
\end{table}

\begin{table}[H]
\centering\small
\caption{Left: test AUC of the frozen scorer, and coverage of rule~\eqref{eq:cert} (``excl.'', boundary trace excluded) and of the boundary-included variant (``incl.''), $\alpha=\pi_0$. Right: BH conformal selection at level $\alpha=\pi_0$: mean number of calibration errors $n_0$, the lattice minimum $\lceil m/(\alpha(n_0+1))\rceil$ of Proposition~\ref{prop:lattice} relative to the test size $m$, and the fraction of replicates with no rejection.}
\label{tab:bh}
\setlength{\tabcolsep}{2.6pt}
\begin{tabular}{l ccc ccc ccc rr ccc}
\toprule
 & \multicolumn{3}{c}{Confidence} & \multicolumn{3}{c}{J-lens} & \multicolumn{3}{c}{Probe} & & & \multicolumn{3}{c}{BH: no rejection} \\
\cmidrule(lr){2-4}\cmidrule(lr){5-7}\cmidrule(lr){8-10}\cmidrule(lr){13-15}
Model & AUC & excl. & incl. & AUC & excl. & incl. & AUC & excl. & incl. & $n_0$ & min.\ rej. & Conf. & J-lens & Probe \\
\midrule
Qwen3.5-0.8B & 0.70 & 0.218 & 0.213 & 0.66 & 0.080 & 0.077 & 0.81 & 0.489 & 0.486 & 107.8 & 2/119 & 0.09 & 0.21 & 0.01 \\
Qwen3.5-4B & 0.73 & 0.389 & 0.390 & 0.65 & 0.176 & 0.172 & 0.82 & 0.721 & 0.715 & 23.4 & 31/116 & 0.87 & 0.97 & 0.57 \\
OLMo-3-7B & 0.75 & 0.326 & 0.313 & 0.63 & 0.081 & 0.081 & 0.76 & 0.338 & 0.323 & 23.4 & 6/50 & 0.41 & 0.69 & 0.28 \\
Qwen3-1.7B & 0.66 & 0.094 & 0.088 & 0.61 & 0.042 & 0.047 & 0.76 & 0.212 & 0.216 & 17.9 & 15/58 & 0.96 & 0.96 & 0.76 \\
Qwen2.5-7B-It & 0.60 & 0.149 & 0.160 & 0.62 & 0.156 & 0.157 & 0.68 & 0.232 & 0.232 & 12.9 & 18/54 & 0.98 & 0.90 & 0.91 \\
Qwen3-4B & 0.66 & 0.199 & 0.189 & 0.61 & 0.155 & 0.148 & 0.71 & 0.311 & 0.300 & 14.3 & 79/114 & 1.00 & 1.00 & 1.00 \\
Qwen3-8B & 0.61 & 0.016 & 0.019 & 0.47 & 0.014 & 0.012 & 0.67 & 0.039 & 0.043 & 6.5 & 79/55 & 1.00 & 1.00 & 1.00 \\
\bottomrule
\end{tabular}
\end{table}

\begin{table}[H]
\centering\small
\caption{Gaussian score model with known selective risk; 20{,}000 calibration draws per row, $\delta=0.10$. $P(R>\alpha)$ is the quantity bounded by $(\alpha,\delta)$-validity; $P(R>\alpha\mid\text{fire})$ is the probability that an issued certificate is wrong; the next two columns give the median and 90th percentile of $R(\hat\lambda)/\alpha$ among failing draws, i.e.\ how far a wrong certificate overshoots its target.}
\label{tab:synthetic}
\setlength{\tabcolsep}{4pt}
\begin{tabular}{cccc l cccccc}
\toprule
$\pi_0$ & AUC & $n_{\mathrm{cal}}$ & $\alpha$ & Procedure & $P_{\mathrm{fire}}$ & $P(R>\alpha)$ & $P(R>\alpha\mid\text{fire})$ & med.\ $R/\alpha$ & p90 $R/\alpha$ & Mean cov. \\
\midrule
0.16 & 0.82 & 154 & $0.75\pi_0$ & CP, Eq.~(1) & 0.558 & 0.0014 & 0.003 & 1.06 & 1.13 & 0.322 \\
0.16 & 0.82 & 154 & $0.75\pi_0$ & Data-dep.\ walk & 0.985 & 0.0319 & 0.032 & 1.06 & 1.24 & 0.631 \\
\addlinespace
0.16 & 0.82 & 154 & $0.5\pi_0$ & CP, Eq.~(1) & 0.089 & 0.0019 & 0.021 & 1.04 & 1.15 & 0.049 \\
0.16 & 0.82 & 154 & $0.5\pi_0$ & Data-dep.\ walk & 0.770 & 0.0368 & 0.048 & 1.11 & 1.34 & 0.331 \\
\addlinespace
0.16 & 0.71 & 154 & $0.75\pi_0$ & CP, Eq.~(1) & 0.112 & 0.0018 & 0.016 & 1.04 & 1.14 & 0.054 \\
0.16 & 0.71 & 154 & $0.75\pi_0$ & Data-dep.\ walk & 0.724 & 0.0374 & 0.052 & 1.09 & 1.20 & 0.306 \\
\addlinespace
0.16 & 0.71 & 154 & $0.5\pi_0$ & CP, Eq.~(1) & 0.005 & 0.0026 & 0.520 & 1.12 & 1.28 & 0.003 \\
0.16 & 0.71 & 154 & $0.5\pi_0$ & Data-dep.\ walk & 0.276 & 0.0433 & 0.157 & 1.11 & 1.35 & 0.092 \\
\addlinespace
0.22 & 0.76 & 78 & $0.75\pi_0$ & CP, Eq.~(1) & 0.118 & 0.0020 & 0.017 & 1.07 & 1.22 & 0.061 \\
0.22 & 0.76 & 78 & $0.75\pi_0$ & Data-dep.\ walk & 0.712 & 0.0411 & 0.058 & 1.10 & 1.27 & 0.339 \\
\addlinespace
0.22 & 0.76 & 78 & $0.5\pi_0$ & CP, Eq.~(1) & 0.004 & 0.0025 & 0.694 & 1.11 & 1.23 & 0.002 \\
0.22 & 0.76 & 78 & $0.5\pi_0$ & Data-dep.\ walk & 0.309 & 0.0418 & 0.135 & 1.14 & 1.40 & 0.135 \\
\addlinespace
0.09 & 0.66 & 153 & $0.75\pi_0$ & CP, Eq.~(1) & 0.019 & 0.0027 & 0.148 & 1.12 & 1.24 & 0.011 \\
0.09 & 0.66 & 153 & $0.75\pi_0$ & Data-dep.\ walk & 0.337 & 0.0359 & 0.106 & 1.11 & 1.27 & 0.158 \\
\addlinespace
0.09 & 0.66 & 153 & $0.5\pi_0$ & CP, Eq.~(1) & 0.000 & 0.0001 & 1.000 & 1.59 & 1.62 & 0.000 \\
0.09 & 0.66 & 153 & $0.5\pi_0$ & Data-dep.\ walk & 0.089 & 0.0786 & 0.882 & 1.11 & 1.37 & 0.041 \\
\bottomrule
\end{tabular}
\end{table}

\begin{table}[H]
\centering\scriptsize
\caption{FS-floor on real data, mean coverage for each (model, signal) pair; GSM8K, $\delta=0.10$, 1{,}000 replicates. CP: rule~\eqref{eq:cert}. In-s.: FS-floor with the grid from in-sample training scores, $c=2$. $c{=}1,\dots,4$: cross-fitted grid at fixed $c$. Sim.: cross-fitted grid with $c$ chosen per cell by the pre-calibration simulation.}
\label{tab:fsfull}
\setlength{\tabcolsep}{2.2pt}
\begin{tabular}{ll cccccccc cccccccc}
\toprule
 & & \multicolumn{8}{c}{$\alpha=\pi_0$} & \multicolumn{8}{c}{$\alpha=0.75\pi_0$} \\
\cmidrule(lr){3-10}\cmidrule(lr){11-18}
Model & Signal & CP & In-s. & $c{=}1$ & $1.5$ & $2$ & $3$ & $4$ & Sim. & CP & In-s. & $c{=}1$ & $1.5$ & $2$ & $3$ & $4$ & Sim. \\
\midrule
Qwen3.5-0.8B & Conf. & 0.218 & 0.113 & 0.037 & 0.134 & 0.134 & 0.217 & 0.291 & 0.291 & 0.007 & 0.023 & 0.012 & 0.012 & 0.021 & 0.034 & 0.043 & 0.045 \\
 & J-lens & 0.080 & 0.076 & 0.007 & 0.030 & 0.030 & 0.094 & 0.134 & 0.134 & 0.001 & 0.011 & 0.002 & 0.002 & 0.007 & 0.011 & 0.013 & 0.013 \\
 & Probe & 0.489 & 0.278 & 0.319 & 0.502 & 0.502 & 0.616 & 0.652 & 0.652 & 0.063 & 0.064 & 0.108 & 0.108 & 0.151 & 0.180 & 0.219 & 0.219 \\
\addlinespace
Qwen3.5-4B & Conf. & 0.389 & 0.466 & 0.241 & 0.367 & 0.464 & 0.650 & 0.701 & 0.701 & 0.113 & 0.280 & 0.131 & 0.194 & 0.253 & 0.366 & 0.313 & 0.365 \\
 & J-lens & 0.176 & 0.223 & 0.018 & 0.100 & 0.166 & 0.269 & 0.336 & 0.336 & 0.024 & 0.074 & 0.014 & 0.045 & 0.067 & 0.091 & 0.097 & 0.098 \\
 & Probe & 0.721 & 0.736 & 0.481 & 0.719 & 0.822 & 0.893 & 0.905 & 0.905 & 0.390 & 0.544 & 0.362 & 0.517 & 0.644 & 0.699 & 0.705 & 0.705 \\
\addlinespace
OLMo-3-7B & Conf. & 0.326 & 0.424 & 0.056 & 0.251 & 0.378 & 0.543 & 0.573 & 0.573 & 0.099 & 0.220 & 0.046 & 0.122 & 0.218 & 0.268 & 0.259 & 0.266 \\
 & J-lens & 0.081 & 0.198 & 0.021 & 0.058 & 0.102 & 0.177 & 0.242 & 0.242 & 0.007 & 0.048 & 0.007 & 0.018 & 0.022 & 0.058 & 0.072 & 0.048 \\
 & Probe & 0.338 & 0.116 & 0.307 & 0.467 & 0.497 & 0.538 & 0.518 & 0.518 & 0.087 & 0.050 & 0.210 & 0.237 & 0.256 & 0.246 & 0.264 & 0.246 \\
\addlinespace
Qwen3-1.7B & Conf. & 0.094 & 0.071 & 0.043 & 0.057 & 0.078 & 0.167 & 0.193 & 0.193 & 0.008 & 0.020 & 0.010 & 0.013 & 0.024 & 0.060 & 0.058 & 0.059 \\
 & J-lens & 0.042 & 0.090 & 0.044 & 0.058 & 0.070 & 0.093 & 0.115 & 0.115 & 0.003 & 0.023 & 0.013 & 0.015 & 0.020 & 0.018 & 0.019 & 0.019 \\
 & Probe & 0.212 & 0.140 & 0.169 & 0.195 & 0.262 & 0.411 & 0.454 & 0.489 & 0.055 & 0.050 & 0.055 & 0.107 & 0.130 & 0.174 & 0.170 & 0.216 \\
\addlinespace
Qwen2.5-7B-It & Conf. & 0.149 & 0.219 & 0.039 & 0.058 & 0.171 & 0.236 & 0.352 & 0.352 & 0.012 & 0.071 & 0.008 & 0.033 & 0.049 & 0.124 & 0.088 & 0.086 \\
 & J-lens & 0.156 & 0.349 & 0.184 & 0.260 & 0.329 & 0.405 & 0.403 & 0.405 & 0.030 & 0.140 & 0.073 & 0.115 & 0.142 & 0.125 & 0.067 & 0.112 \\
 & Probe & 0.232 & 0.104 & 0.206 & 0.295 & 0.363 & 0.493 & 0.522 & 0.533 & 0.033 & 0.049 & 0.076 & 0.104 & 0.180 & 0.231 & 0.155 & 0.234 \\
\addlinespace
Qwen3-4B & Conf. & 0.199 & 0.283 & 0.025 & 0.143 & 0.274 & 0.375 & 0.391 & 0.396 & 0.033 & 0.132 & 0.010 & 0.079 & 0.121 & 0.168 & 0.181 & 0.159 \\
 & J-lens & 0.155 & 0.240 & 0.086 & 0.172 & 0.200 & 0.244 & 0.311 & 0.314 & 0.019 & 0.077 & 0.048 & 0.054 & 0.071 & 0.103 & 0.124 & 0.098 \\
 & Probe & 0.311 & 0.403 & 0.241 & 0.340 & 0.457 & 0.482 & 0.508 & 0.513 & 0.077 & 0.197 & 0.107 & 0.189 & 0.237 & 0.285 & 0.285 & 0.285 \\
\addlinespace
Qwen3-8B & Conf. & 0.016 & 0.108 & 0.027 & 0.109 & 0.116 & 0.155 & 0.000 & 0.150 & 0.002 & 0.047 & 0.023 & 0.030 & 0.039 & 0.000 & 0.000 & 0.033 \\
 & J-lens & 0.014 & 0.106 & 0.015 & 0.056 & 0.090 & 0.089 & 0.000 & 0.104 & 0.001 & 0.012 & 0.017 & 0.033 & 0.012 & 0.000 & 0.000 & 0.012 \\
 & Probe & 0.039 & 0.139 & 0.146 & 0.180 & 0.238 & 0.269 & 0.000 & 0.276 & 0.004 & 0.068 & 0.076 & 0.075 & 0.101 & 0.000 & 0.000 & 0.106 \\
\bottomrule
\end{tabular}
\end{table}

\begin{table}[H]
\centering\scriptsize
\caption{FS-floor against two cheap baselines on the real cells (means over 7 models $\times$ 3 signals, 1{,}000 replicates; each entry is mean coverage / $P_{\mathrm{fire}}$). $G$: number of Bonferroni grid points in rule~\eqref{eq:cert} ($G=20$ in the paper). Floor start: the single fixed threshold at $j_0$ tested at level $\delta$, i.e.\ FS-floor without the walk. Wins: pairs where FS-floor covers more than the baseline. The last two columns compare the $P_{\mathrm{fire}}$ predicted before calibration by the simulation that chooses $c$ with the realized $P_{\mathrm{fire}}$ across the 21 pairs (Spearman correlation and mean absolute error).}
\label{tab:baselines}
\setlength{\tabcolsep}{3.5pt}
\begin{tabular}{l ccccc cc cc}
\toprule
 & \multicolumn{3}{c}{Rule~\eqref{eq:cert}} & & & \multicolumn{2}{c}{FS-floor wins vs.} & \multicolumn{2}{c}{Predicted $P_{\mathrm{fire}}$} \\
\cmidrule(lr){2-4}\cmidrule(lr){7-8}\cmidrule(lr){9-10}
Target & $G{=}20$ & $G{=}10$ & $G{=}5$ & Floor start & FS-floor & $G{=}5$ & Floor start & Spearman & MAE \\
\midrule
$\pi_0$ & 0.211 / 0.33 & 0.242 / 0.37 & 0.259 / 0.40 & 0.186 / 0.50 & 0.390 / 0.50 & 21/21 & 21/21 & 0.72 & 0.154 \\
$0.75\pi_0$ & 0.051 / 0.10 & 0.067 / 0.13 & 0.072 / 0.14 & 0.101 / 0.28 & 0.163 / 0.28 & 21/21 & 21/21 & 0.70 & 0.104 \\
$0.5\pi_0$ & 0.004 / 0.01 & 0.007 / 0.01 & 0.010 / 0.02 & 0.024 / 0.07 & 0.033 / 0.07 & 20/21 & 21/21 & 0.72 & 0.046 \\
\bottomrule
\end{tabular}
\end{table}

\begin{table}[H]
\centering\scriptsize
\caption{Every shift cell (Section~\ref{sec:shift}): GSM8K-trained scorer and GSM8K-calibrated threshold ($\alpha$: training-partition base error on GSM8K) applied to the same model's traces on another benchmark. $\pi_{\mathrm{new}}$: base error rate on the new benchmark. Per signal: mean coverage and $\hat R$, the error among accepted traces pooled over firing replicates; bold where $\hat R>\alpha$. 1{,}000 replicates per cell.}
\label{tab:shift}
\setlength{\tabcolsep}{4pt}
\begin{tabular}{ll cc cc cc cc}
\toprule
 & & & & \multicolumn{2}{c}{Confidence} & \multicolumn{2}{c}{J-lens} & \multicolumn{2}{c}{Probe} \\
\cmidrule(lr){5-6}\cmidrule(lr){7-8}\cmidrule(lr){9-10}
Model & Benchmark & $\pi_{\mathrm{new}}$ & $\alpha$ & Cov. & $\hat R$ & Cov. & $\hat R$ & Cov. & $\hat R$ \\
\midrule
Qwen3.5-0.8B & ARC-C & 0.77 & 0.64 & 0.00 & 0.61 & 0.06 & \textbf{0.83} & 0.27 & \textbf{0.75} \\
 & BBH-logic & 0.61 & 0.64 & 0.07 & 0.55 & 0.05 & 0.59 & 0.09 & 0.60 \\
 & CSQA & 0.79 & 0.64 & 0.00 & 0.01 & 0.05 & \textbf{0.86} & 0.27 & \textbf{0.76} \\
 & StrategyQA & 0.47 & 0.64 & 0.00 & 0.49 & 0.10 & 0.47 & 0.13 & 0.44 \\
\addlinespace
Qwen3.5-4B & ARC-C & 0.07 & 0.16 & 0.11 & 0.02 & 0.27 & 0.07 & 0.60 & 0.06 \\
 & BBH-logic & 0.19 & 0.16 & 0.29 & \textbf{0.20} & 0.28 & \textbf{0.19} & 0.60 & \textbf{0.17} \\
 & CSQA & 0.20 & 0.16 & 0.05 & 0.10 & 0.23 & 0.16 & 0.52 & 0.15 \\
 & StrategyQA & 0.33 & 0.16 & 0.09 & \textbf{0.27} & 0.23 & \textbf{0.34} & 0.59 & \textbf{0.29} \\
\addlinespace
OLMo-3-7B & ARC-C & 0.26 & 0.34 & 0.21 & 0.22 & 0.09 & 0.32 & 0.44 & 0.24 \\
 & BBH-logic & 0.63 & 0.34 & 0.44 & \textbf{0.60} & 0.07 & \textbf{0.57} & 0.35 & \textbf{0.62} \\
 & CSQA & 0.30 & 0.34 & 0.20 & 0.33 & 0.06 & 0.32 & 0.52 & 0.33 \\
\addlinespace
Qwen3-1.7B & ARC-C & 0.19 & 0.21 & 0.09 & 0.12 & 0.14 & 0.18 & 0.11 & 0.13 \\
 & CSQA & 0.35 & 0.21 & 0.10 & \textbf{0.29} & 0.14 & \textbf{0.35} & 0.10 & \textbf{0.29} \\
 & StrategyQA & 0.36 & 0.21 & 0.04 & \textbf{0.32} & 0.14 & \textbf{0.36} & 0.06 & \textbf{0.36} \\
\addlinespace
Qwen2.5-7B-It & ARC-C & 0.11 & 0.22 & 0.34 & 0.12 & 0.41 & 0.12 & 0.35 & 0.11 \\
 & BBH-logic & 0.41 & 0.22 & 0.33 & \textbf{0.42} & 0.42 & \textbf{0.45} & 0.39 & \textbf{0.40} \\
 & CSQA & 0.19 & 0.22 & 0.35 & 0.17 & 0.43 & 0.16 & 0.41 & 0.17 \\
 & StrategyQA & 0.27 & 0.22 & 0.34 & \textbf{0.26} & 0.37 & \textbf{0.25} & 0.42 & \textbf{0.26} \\
\addlinespace
Qwen3-4B & ARC-C & 0.15 & 0.10 & 0.31 & \textbf{0.11} & 0.29 & \textbf{0.14} & 0.37 & \textbf{0.12} \\
 & BBH-logic & 0.18 & 0.10 & 0.36 & \textbf{0.18} & 0.27 & \textbf{0.19} & 0.42 & \textbf{0.18} \\
 & StrategyQA & 0.26 & 0.10 & 0.27 & \textbf{0.23} & 0.32 & \textbf{0.24} & 0.30 & \textbf{0.24} \\
\addlinespace
Qwen3-8B & ARC-C & 0.08 & 0.09 & 0.02 & 0.02 & 0.01 & 0.07 & 0.10 & 0.08 \\
 & CSQA & 0.18 & 0.09 & 0.01 & \textbf{0.12} & 0.01 & \textbf{0.18} & 0.08 & \textbf{0.17} \\
 & StrategyQA & 0.23 & 0.09 & 0.02 & \textbf{0.23} & 0.01 & \textbf{0.22} & 0.13 & \textbf{0.23} \\
\bottomrule
\end{tabular}
\end{table}

\begin{table}[H]
\centering\scriptsize
\caption{Best-of-$n$ selection (Section~\ref{sec:pressure}) per model and signal. $P_{\mathrm{fire}}$: firing probability of the naive arm (independent of $n$). $\hat R/\alpha$: error among accepted BoN-selected test traces relative to the target, naive arm (-- where fewer than 2\% of replicates fire). Corrected: mean coverage of the selection-conditional arm at $n=1$ and $n=32$. $\Delta$acc: change in accuracy of the selected trace from $n=1$ to $n=32$, and from its peak over $n$ to $n=32$.}
\label{tab:bon}
\setlength{\tabcolsep}{4pt}
\begin{tabular}{ll c ccc cc cc}
\toprule
 & & & \multicolumn{3}{c}{$\hat R/\alpha$, naive} & \multicolumn{2}{c}{Corrected cov.} & \multicolumn{2}{c}{$\Delta$acc} \\
\cmidrule(lr){4-6}\cmidrule(lr){7-8}\cmidrule(lr){9-10}
Model & Signal & $P_{\mathrm{fire}}$ & $n{=}1$ & $n{=}8$ & $n{=}32$ & $n{=}1$ & $n{=}32$ & $n{=}32$ & from peak \\
\midrule
Qwen3.5-4B & confidence & 0.031 & 0.92 & 1.14 & 1.25 & 0.019 & 0.002 & -0.004 & -0.022 \\
 & J-lens & 0.015 & -- & -- & -- & 0.008 & 0.000 & +0.001 & -0.016 \\
 & probe & 0.136 & 0.70 & 0.89 & 0.69 & 0.087 & 0.222 & +0.038 & +0.000 \\
 & logit lens & 0.014 & -- & -- & -- & 0.008 & 0.000 & -0.017 & -0.032 \\
 & surface & 0.002 & -- & -- & -- & 0.001 & 0.000 & -0.040 & -0.040 \\
\addlinespace
Qwen3-4B & confidence & 0.093 & 0.78 & 0.91 & 1.12 & 0.054 & 0.005 & +0.001 & -0.006 \\
 & J-lens & 0.007 & -- & -- & -- & 0.006 & 0.000 & +0.005 & -0.005 \\
 & probe & 0.045 & 0.87 & 1.33 & 1.52 & 0.026 & 0.000 & -0.029 & -0.031 \\
 & logit lens & 0.008 & -- & -- & -- & 0.004 & 0.000 & -0.001 & -0.003 \\
 & surface & 0.007 & -- & -- & -- & 0.005 & 0.000 & -0.017 & -0.017 \\
\bottomrule
\end{tabular}
\end{table}

\begin{table}[H]
\centering\small
\caption{GSM8K out-of-fold AUC (5-fold, grouped by problem, logistic regression) of each signal, and paired bootstrap differences ($^{*}$: two-sided $p<0.05$, uncorrected). ``Random'': J-lens features with a norm-matched random transport. These are cross-validated AUCs on all traces and differ from the frozen-scorer test AUCs of Table~\ref{tab:bh}.}
\label{tab:auc}
\setlength{\tabcolsep}{4pt}
\begin{tabular}{lrr cccccc ccc}
\toprule
Model & Traces & Acc. & Conf. & J-lens & Logit lens & Surface & Random & Probe & J$-$Conf. & J$-$Surf. & Probe$-$J \\
\midrule
Qwen3.5-0.8B & 1244 & 0.33 & 0.714 & 0.713 & 0.661 & 0.531 & 0.641 & 0.842 & -0.001 & +0.182$^{*}$ & +0.129$^{*}$ \\
Qwen3.5-4B & 1365 & 0.84 & 0.715 & 0.707 & 0.669 & 0.515 & 0.553 & 0.847 & -0.009 & +0.191$^{*}$ & +0.140$^{*}$ \\
OLMo-3-7B & 295 & 0.66 & 0.748 & 0.642 & 0.595 & 0.613 & 0.594 & 0.770 & -0.106$^{*}$ & +0.029 & +0.128$^{*}$ \\
Qwen3-1.7B & 373 & 0.78 & 0.675 & 0.679 & 0.635 & 0.612 & 0.604 & 0.810 & +0.005 & +0.067 & +0.131$^{*}$ \\
Qwen2.5-7B-It & 323 & 0.81 & 0.584 & 0.648 & 0.522 & 0.602 & 0.559 & 0.668 & +0.065 & +0.046 & +0.020 \\
Qwen3-4B & 1387 & 0.91 & 0.610 & 0.607 & 0.551 & 0.549 & 0.625 & 0.751 & -0.003 & +0.057 & +0.144$^{*}$ \\
Qwen3-8B & 350 & 0.91 & 0.592 & 0.344 & 0.411 & 0.271 & 0.479 & 0.809 & -0.248$^{*}$ & +0.073 & +0.466$^{*}$ \\
\bottomrule
\end{tabular}
\end{table}

\end{document}